\documentclass[journal]{IEEEtran}

\usepackage[colorlinks,linkcolor=blue,citecolor=blue]{hyperref}
\usepackage{bm}
\usepackage{bbm}
\usepackage{booktabs}
\usepackage{diagbox, slashbox}
\usepackage[vlined,ruled,linesnumbered]{algorithm2e}
\SetAlCapFnt{\small}        
\SetAlCapNameFnt{\small}    
\usepackage[pdftex]{graphicx}
\usepackage[caption=false,margin=0pt]{subfig}
\usepackage{cite}
\usepackage{amsthm,amsmath,amsfonts,amssymb,mathtools,url, multirow, enumerate,setspace,colortbl,stfloats, float}
\usepackage[dvipsnames]{xcolor}
\usepackage{xspace}
\xspaceaddexceptions{$} 
\usepackage{newtxtt}
\usepackage{sourcesanspro}
\usepackage{arydshln} 
\usepackage{makecell} %
\usepackage{enumitem}

\usepackage{array}

\usepackage{textcomp}
\usepackage{verbatim}

\usepackage{pifont}

\usepackage{threeparttable}

\usepackage{tcolorbox}

\usepackage{utfsym}
\newcommand{\cmark}{\textcolor{green!60!black}{\usym{2713}}} 
\newcommand{\xmark}{\textcolor{red!70!black}{\usym{2717}}}  

\theoremstyle{plain} 
\newtheorem{theorem}{Theorem}

\theoremstyle{definition} 

\newtheorem{assumption}{Assumption}

\theoremstyle{remark} 
\newtheorem*{remark}{Remark} 

\newcolumntype{C}{>{\centering\arraybackslash}p{1cm}}
\newcolumntype{R}{>{\raggedleft\arraybackslash}p{1cm}}

\newcommand{\knn}{\texorpdfstring{$k$NN}{kNN}\xspace}

\newcommand{\alg}{\texorpdfstring{\texttt{$k$NNProxy}}{kNNProxy}\xspace}
\newcommand{\algm}{\texorpdfstring{\texttt{MoP}}{MokNNProxy}\xspace}

\newcommand{\refs}[2]{\hyperref[#1]{\ref*{#1}#2}}

\makeatletter
\DeclareRobustCommand\onedot{\futurelet\@let@token\@onedot}
\def\@onedot{\ifx\@let@token.\else.\null\fi\xspace}

\def\etal{\emph{et al}\onedot}
\makeatother

\DeclareMathOperator*{\argmax}{arg\,max}
\DeclareMathOperator*{\argmin}{arg\,min}

\definecolor{gaintext}{RGB}{140,20,20}
\definecolor{losstext}{RGB}{20,120,20}
\newtcbox{\gaintab}{
on line, 
box align=base, 
colback=WildStrawberry!10,
colframe=orange,
coltext=gaintext,
size=fbox,arc=3pt, 
before upper=\strut, 
top=-2.5pt, 
bottom=-4.5pt, 
left=-2pt, 
right=-2pt, 
boxrule=0pt
}

\newtcbox{\losstab}{
on line,
box align=base,
colback=green!10,
colframe=green!60!black,
coltext=losstext,
size=fbox,arc=3pt,
before upper=\strut,
top=-2.5pt,
bottom=-4.5pt,
left=-2pt,
right=-2pt,
boxrule=0pt
}

\newcommand{\gain}[1]{{\scriptsize\gaintab{\textbf{\textsf{↑#1\%}}}}}
\newcommand{\gainraw}[1]{{\scriptsize\gaintab{\textbf{\textsf{#1}}}}}
\newcommand{\loss}[1]{{\scriptsize\losstab{\textbf{\textsf{↓#1\%}}}}}

\begin{document}
\bstctlcite{IEEEexample:BSTcontrol}

\title{\alg: Efficient Training-Free Proxy Alignment \\ for Black-Box Zero-Shot LLM-Generated Text Detection}

\author{
Kahim~Wong,~Kemou~Li,~Haiwei~Wu,~\IEEEmembership{Member,~IEEE},~and~Jiantao~Zhou,~\IEEEmembership{Senior Member,~IEEE}
\IEEEcompsocitemizethanks{
\IEEEcompsocthanksitem Kahim Wong, Kemou Li, and Jiantao Zhou are with the State Key Laboratory of Internet of Things for Smart City and the Department of Computer and Information Science, Faculty of Science and Technology, University of Macau, Macao 999078, China (e-mail: yc37437@um.edu.mo; kemou.li@connect.umac.mo; jtzhou@umac.mo). \emph{(Corresponding author: Jiantao Zhou.)}

Haiwei Wu is with the School of Information and Software Engineering, University of Electronic Science and Technology of China, Chengdu 611731, China (e-mail: haiweiwu@uestc.edu.cn).
}
}


\maketitle

\begin{abstract}
LLM-generated text (LGT) detection is essential for reliable forensic analysis and for mitigating LLM misuse. Existing LGT detectors can generally be categorized into two broad classes: learning-based approaches and zero-shot methods. Compared with learning-based detectors, zero-shot methods are particularly promising because they eliminate the need to train task-specific classifiers.
However, the reliability of zero-shot methods fundamentally relies on the assumption that an off-the-shelf proxy LLM is well aligned with the often unknown source LLM, a premise that rarely holds in real-world black-box scenarios.
To address this discrepancy, existing proxy alignment methods typically rely on supervised fine-tuning of the proxy or repeated interactions with commercial APIs, thereby increasing deployment costs, exposing detectors to silent API changes, and limiting robustness under domain shift.
Motivated by these limitations, in this work we propose the $k$-nearest neighbor proxy (\alg), a training-free and query-efficient proxy alignment framework that repurposes the \knn language model (\knn-LM) retrieval mechanism as a domain adapter for a fixed proxy LLM. 
Specifically, a lightweight datastore is constructed once from a target-reflective LGT corpus, either via fixed-budget querying or from existing datasets. 
During inference, nearest-neighbor evidence induces a token-level predictive distribution that is interpolated with the proxy output, yielding an aligned prediction without proxy fine-tuning or per-token API outputs.
To improve robustness under domain shift, we extend \alg into a mixture of proxies (\algm)  that routes each input to a domain-specific datastore for domain-consistent retrieval. 
Moreover, we theoretically derive a corpus-size-dependent approximation error bound that motivates token-wise adaptive hyperparameters for more reliable detection without costly hyperparameter sweeps.
Extensive experiments demonstrate strong detection performance of our method, with an average AUROC of 0.99 across eight recent proprietary LLMs and a 6.45\% improvement in AUROC over the prior state-of-the-art (SOTA) alignment method. 
Beyond binary detection, \alg also provides strong utility for closed-set LLM source attribution with an average ACC of 0.84. The source code is available at \url{https://github.com/KahimWong/kNNProxy}.  
\end{abstract}

\begin{IEEEkeywords}
Large language models, LLM-generated text detection, zero-shot detection, \knn-LM
\end{IEEEkeywords}

\section{Introduction}

\IEEEPARstart{L}{arge} language models (LLMs) have achieved remarkable progress, as exemplified by GPT-5.4~\cite{openai_gpt5_4}, Gemini 3.1~\cite{google2026gemini31}, and Claude 4.6~\cite{anthropic2026opus46}, 
which feature advanced multimodal and reasoning capabilities. 
Such progress enables a wide range of applications across domains, including customer service~\cite{jiang2025chatmap}, healthcare diagnostics~\cite{fan2025ai}, and scientific research~\cite{lightman2023let}. 
However, these advances also introduce critical safety concerns: hallucination, prompt injection vulnerability, and privacy leakage are commonly observed in LLMs, and such issues can result in misinformation and adversarial misuse~\cite{ra2025halogen, liu2025compromising, oh2025amplifying, deng2025hardening, he2025defense,li2025llm,li2026aegis,wong2025adcd,chen2026cascade}.
In particular, LLM-generated text (LGT) poses challenges to trust in digital communication, academic integrity, and information authenticity across society~\cite{zhao2024silent}. 
Consequently, LGT detection, which aims to determine whether a given suspicious text is generated by an LLM or written by a human, emerges as an important challenge for mitigating misuse risks and supporting trustworthy forensics~\cite{wu2026editprint,tu2026featdistill}.

\begin{figure}[tb]
\centering
\includegraphics[width=\linewidth]{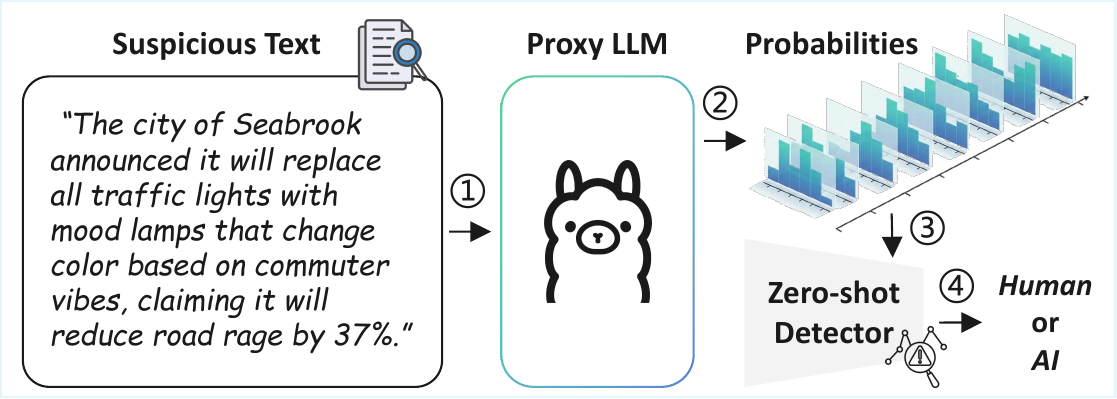}
\vspace*{-5mm}
\caption{\textbf{Pipeline of zero-shot LGT detection.} 
\ding{172}\;Given a suspicious text as input, \ding{173}\;a proxy LLM outputs token-level probabilities. 
\ding{174}\;A zero-shot detector uses them to \ding{175}\;produce a value indicating human/AI generation.}
\label{fig:det_pipeline}
\end{figure}

Existing LGT detection methods identify distinctive characteristics of LGT through either supervised learning~\cite{solaiman2019release} or zero-shot approaches~\cite{bao2024fast, hans2024spotting, xu2025trainingfree}. Learning-based detectors rely on labeled human–LLM data to train binary classifiers~\cite{li2024rml,li2025rml++}, whereas zero-shot detectors primarily exploit statistical properties of LLMs and generally avoid training task-specific classifiers (cf. Sec.~\ref{sec:related-work-lgt} for detailed related works).
As illustrated in Fig.~\ref{fig:det_pipeline}, zero-shot detection employs a proxy LLM to simulate the source LLM (the LLM responsible for generating the given text) and determines whether the text is LGT based on statistical patterns exhibited by the proxy LLM output.
Notably, zero-shot detection is highly sensitive to how accurately the proxy approximates the source LLM; accordingly, strong performance is usually observed only when the proxy output distributions are closely aligned~\cite{mi2024smaller, shi2025phantomhunter} with the source.
When the source LLM is known, as in white-box zero-shot detection, such alignment can be readily achieved. 
In contrast, under the more practical black-box setting where the source LLM is unknown, a single fixed proxy LLM is often insufficient, which makes effective alignment substantially more challenging.

To narrow the alignment gap in black-box zero-shot detection, proxy alignment methods seek to align the proxy LLM with the output distribution of strong proprietary LLMs (i.e., closed-source and commercially owned LLMs).
For instance, as illustrated in Fig.~\ref{fig:proxy_align}, DALD~\cite{zeng2024dald} aligns the proxy via supervised fine-tuning on text generated by a proprietary LLM, while Glimpse~\cite{bao2025glimpse} leverages top-$k$ probabilities returned from commercial APIs to reconstruct an approximate token-level probability distribution that mirrors proprietary model behavior.
Despite their effectiveness, existing prevalent methods encounter two practical obstacles that hinder real-world deployment: \emph{cost} and \emph{robustness under domain shift}.
DALD introduces considerable computational overhead as a result of supervised fine-tuning, while Glimpse is often query-intensive due to repeated interactions with commercial APIs, which expose the method to silent model updates and evolving API behaviors~\cite{glimpse_issue}.
More critically, both approaches exhibit limited robustness on out-of-domain (OOD) text.
DALD typically requires a separately fine-tuned proxy for each domain, whereas Glimpse lacks an explicit mechanism to recalibrate its approximation when the input distribution deviates from the queried data, yielding unreliable alignment on OOD inputs.
Taken together, the limitations above motivate the need for a reusable alignment module that can be built with a fixed budget and deployed offline, while remaining adaptable across domains without re-training or continual API access.

\begin{figure}[tb]
\centering
\includegraphics[width=\linewidth]{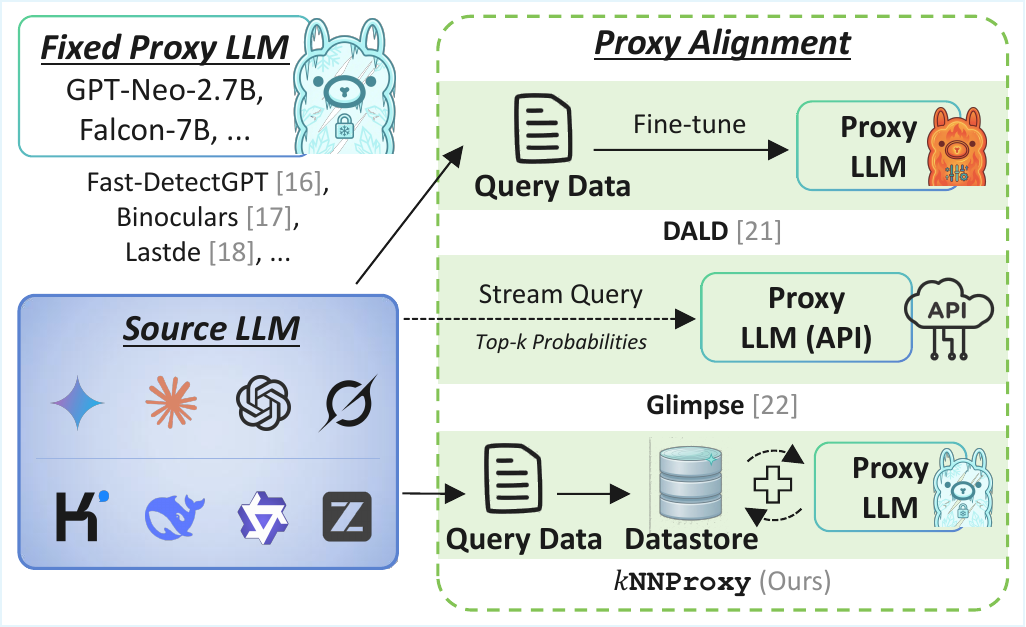}
\vspace*{-5mm}
\caption{\textbf{Illustration of proxy alignment.} 
Fixed-proxy detectors (e.g., Fast-DetectGPT~\cite{bao2024fast}, Binoculars~\cite{hans2024spotting}, Lastde~\cite{xu2025trainingfree}) rely on an off-the-shelf proxy LLM, while proxy alignment narrows the proxy–source gap using queried data. Prior methods align the proxy either by fine-tuning on queried outputs (DALD~\cite{zeng2024dald}) or by streaming API returns (Glimpse~\cite{bao2025glimpse}). Our \alg instead augments a fixed proxy with a datastore built from a fixed query budget, avoiding fine-tuning and repeated API calls while improving detection.}
\label{fig:proxy_align}
\end{figure}

In this paper, we propose \alg, a \textit{training-free} and \textit{query-efficient} proxy alignment approach that repurposes \knn-LM~\cite{kh2020generalization} as a retrieval-based domain adapter for a fixed proxy LLM.
\knn-LM augments a parametric LM with a non-parametric datastore of contexts and next tokens. During inference, nearest-neighbor retrieval induces a token-level predictive distribution that can be interpolated into the proxy output.
Intuitively, \alg works because under similar contexts, the proxy and source LLMs differ mainly in token preference and confidence. The datastore stores source-reflective context--next-token evidence, so retrieval provides a lightweight non-parametric correction signal that nudges proxy probabilities toward source-like behavior exactly where zero-shot detectors are most sensitive.
Rather than using the retrieval mechanism to improve generation, as is typically done in \knn-LM, we use it to calibrate detection-time likelihood patterns. This design explains why \alg can improve robustness under domain shift: when proxy priors become unreliable, retrieved source-reflective neighbors still provide anchored token-level evidence.
Specifically, we first construct a datastore from an LGT corpus that reflects the target model, using either fixed-budget querying or existing datasets. 
Given a suspicious text, we retrieve neighbors at each token position, form a \knn-induced distribution based on their distances and next-tokens, and interpolate this distribution with the proxy output to obtain an aligned prediction.
\alg yields proxy alignment without fine-tuning or per-input API returns at test time, and it supports scalable domain control by swapping datastores without training multiple proxies. Table~\ref{tab:method_comparison} summarizes comparisons between \alg, fixed-proxy baselines, and prior proxy alignment methods.

%

\begin{table}[t]
\centering
\caption{Comparison of black-box zero-shot LGT detection methods.}
\setlength{\tabcolsep}{5.5pt}
\scalebox{0.75}{%
\begin{tabular}{lccccc}
\toprule
\textbf{Method} &
\small \makecell{\textbf{Proxy}\\\textbf{Alignment}} & \small \makecell{\textbf{Training-}\\\textbf{free}} &
\small \makecell{\textbf{Query}\\\textbf{Complexity}} &
\small \makecell{\textbf{Domain}\\\textbf{Control}} &
\small \textbf{AUROC} \\
\midrule
Fast-DetectGPT~\cite{bao2024fast} & $\xmark$ & $\cmark$ & N/A   & $\xmark$      & 0.893 \\
DALD~\cite{zeng2024dald}   & $\cmark$     & $\xmark$     & $\mathrm{O} (1)$ & $\cmark$ & 0.923 \\
Glimpse~\cite{bao2025glimpse}   & $\cmark$     & $\cmark$ & $\mathrm{O}(n)$ & $\xmark$     & 0.930 \\
\rowcolor{gray!15} \alg (Ours)   & $\cmark$   & $\cmark$ & $\mathrm{O}(1)$ & $\cmark$ & \textbf{0.990} \\
\bottomrule
\end{tabular}%
}
\label{tab:method_comparison}
\end{table}

To explicitly improve OOD robustness, we further extend \alg to \algm, an MoE alignment scheme tailored for domain shift.
Since a single datastore can become unreliable when test sample drifts away from its indexing datastore, mismatched neighbors may inject misleading next-token evidence and degrade alignment.
\algm mitigates this failure mode by maintaining multiple domain-specific datastores as retrieval experts and routing each input to the most compatible expert, enforcing domain-consistent retrieval at test time and yielding more stable proxy--source alignment under domain shift.
We further provide a corpus-size-dependent theoretical bound on the approximation error between the source distribution and the aligned distribution. The analysis clarifies when retrieval evidence is trustworthy and directly motivates token-wise adaptive hyperparameters (neighbor size $k$ and interpolation weight $\lambda$) to improve OOD detection reliability without costly tuning. Beyond LGT detection, the same \alg mechanism can also be instantiated as multiple source LLM \knn proxies for closed-set LLM source attribution, highlighting broader forensic utility of our method.

In summary, the contributions of this work are fourfold:
\begin{itemize}[leftmargin=*]
\item \textbf{Training-free proxy alignment.} We propose \alg, a training-free and query-efficient proxy alignment framework for black-box zero-shot LGT detection that repurposes \knn-LM as a \emph{domain adapter} for a fixed proxy LLM, eliminating proxy fine-tuning and avoiding per-input API querying.
\item \textbf{OOD-robust alignment with domain control.} We introduce \algm, an MoE extension that performs domain-consistent retrieval by routing inputs to domain-specific datastores, enhancing robustness under domain shift.
\item \textbf{Theory-guided adaptive design.} We theoretically derive a corpus-size-dependent approximation error bound for retrieval-based alignment and leverage it to develop token-wise adaptive hyperparameters, improving detection while reducing the need for costly hyperparameter sweeps.
\item \textbf{SOTA detection performance.} Extensive experiments show that our method achieves strong detection accuracy, reaching an average AUROC of 0.99 across eight recent proprietary LLMs and improving over the prior SOTA proxy alignment baseline by 6.45\% in AUROC.
\end{itemize}

\subsubsection*{Notations} 
Given a text $\mathbf{x}=[x_1,\ldots,x_T]$ of length $T$, we prepend a begin-of-sequence token $x_0$ and denote the prefix before position $i$ by $\mathbf{x}_{<i}=[x_0,\ldots,x_{i-1}]$.
For any autoregressive language model $M$, we write $\bm{\pi}_{M}^{(i)}=\bm{\pi}_{M}(\cdot \mid \mathbf{x}_{<i}) \in \Delta^{|\mathcal{V}|-1}$ for its next-token distribution at position $i$, where $\mathcal{V}$ is the vocabulary and $\Delta^{|\mathcal{V}|-1}$ is the probability simplex.
The scalar probability assigned to token $v \in \mathcal{V}$ is denoted by $\pi_M^{(i)}(v)$, namely, the $v$-th entry of $\bm{\pi}_M^{(i)}$.

\subsubsection*{Organization} 
The rest of this paper is organized as follows. 
Sec.~\ref{sec:related} reviews related works on learning-based and zero-shot LGT detection. 
Sec.~\ref{sec:method} describes \alg and \algm, including datastore construction, retrieval-interpolation alignment, and the MoE-style router for domain control. 
Sec.~\ref{sec:theory} derives an approximation error bound for the retrieval-induced distribution and motivates token-adaptive hyperparameters. 
Sec.~\ref{sec:experiment} reports experimental results and ablations. 
Finally, Sec.~\ref{sec:conclusion} concludes the paper.

\section{Related Work}\label{sec:related}

\subsection{LGT Detection}
\label{sec:related-work-lgt}

Unlike LLM watermarking, which proactively embeds detectable signals into generated text~\cite{wong2025end,wong2025fontguard}, LGT detection identifies LGT without modifying the LLM output, enabling post hoc use on deployed LLMs. 
Existing passive LGT detection methods are typically grouped into two families: learning-based and zero-shot. 
Learning-based detectors, such as RoBERTa~\cite{solaiman2019release} and XLM-RoBERTa~\cite{conneau2020unsupervised}, identify LGT by training a binary classifier on large-scale datasets containing both human-written text (HWT) and LGT.
However, such methods often exhibit limited OOD generalization and tend to degrade noticeably on unseen topics or texts produced by previously unseen source LLMs~\cite{wu2024detectrl,li2024dat,li2025toward}.

Zero-shot LGT detectors can be further categorized into statistics-, perturbation-, and rewrite-based approaches. 
(1)~Statistics-based methods use a proxy LLM to approximate the source model and compute statistical metrics from the proxy, such as log-likelihood~\cite{ge2019gltr}, entropy~\cite{la2008detecting}, rank~\cite{ge2019gltr}, log-rank~\cite{ge2019gltr}, and LRR~\cite{su2023detectllm}, as surrogates for statistics of the unobserved source model. 
Binoculars~\cite{hans2024spotting} leverages two LLMs and scores text using the ratio of perplexity to cross-perplexity. 
Lastde~\cite{xu2025trainingfree} analyzes log-likelihood sequences by mining multi-scale diversity entropy. 
(2)~Perturbation-based methods, exemplified by DetectGPT~\cite{mitchell2023detectgpt}, randomly perturb the test sample and distinguishes LGT by comparing log-likelihoods of the original and perturbed through probability curvature. 
NPR~\cite{su2023detectllm} follows a similar spirit by using the ratio of log-ranks before and after perturbation. 
Fast-DetectGPT~\cite{bao2024fast} further improves the efficiency of DetectGPT by replacing explicit perturbations with conditional probability computations. 
(3)~Rewrite-based methods adopt a different strategy by comparing an input text with regenerated variants.
Revise-Detect~\cite{zhu2023beat} and RADAR~\cite{hu2023radar} build on the assumption that LLMs revise their own generations less extensively than HWT, and detect LGT by examining discrepancies induced by rewriting. DNA-GPT~\cite{yang2024dna} measures N-gram similarity between truncated original text and regenerated text.

A central limitation of zero-shot detection lies in its strong dependence on the proxy model. 
Detection performance remains strong only when the proxy distribution closely matches the source LLM, and degrades as the mismatch increases~\cite{mi2024smaller,shi2025phantomhunter}. 
To mitigate distributional mismatch, several proxy alignment strategies have been explored.
DALD~\cite{zeng2024dald} fine-tunes the proxy model using text generated from a stronger LLM (e.g., GPT-4) to better align the proxy distribution with the target. 
In addition, since many commercial black-box APIs provide access to next-token probability, Glimpse~\cite{bao2025glimpse} reconstructs the entire probability distribution from the top-$k$ probability returned by proprietary models to support efficient detection. 
Despite these advances, zero-shot detectors still often exhibit high false-positive rates, and their reliability further declines under post-editing or intentional evasion~\cite{nicks2024language,wang2025humanizing}, leaving substantial room for further improvement.

\subsection{\knn-LM}
\label{sec:related-work-knnlm}

\knn-LM~\cite{kh2020generalization} improves the generalizability of pre-trained LLMs by interpolating parametric predictions with non-parametric retrievals from an external datastore, combining implicit model knowledge with explicit memory. 
This retrieval-augmented design enables flexible, training-free adaptation, making it suitable for dynamic and resource-constrained settings. 
Building on \knn-LM, subsequent work focuses on scaling retrieval and broadening applicability. 
He \etal~\cite{he2021efficient} develop efficient search and caching to reduce inference cost on large datastores. 
Huang \etal~\cite{huang2023k} design plug-and-play adapters that enable rapid domain adaptation for black-box LLMs without updating parameters. 
Other studies investigate when and why \knn-LM is effective, emphasizing the role of lexical and semantic matching~\cite{drozdov2022you} and clarifying its advantages over purely parametric LMs~\cite{xu2023nearest}. Beyond standard LLM decoding, \knn-based retrieval has been used to guide speculative decoding~\cite{li2024nearest}. Recent findings show that \knn-LM excels on memory-intensive tasks driven by recurring patterns but is less helpful for multi-step reasoning that requires composing new knowledge~\cite{geng2025great}. Further evidence indicates that \knn-LM's gains concentrate on high-frequency tokens, with limited benefits for rare tokens even under long-tail datastores~\cite{nishida2025long}.
 
In contrast to prior studies that apply \knn-LM to improve generation quality, perplexity, or domain adaptation, our work repurposes \knn-LM for LGT detection, a discriminative and forensic task rather than a generative one.
Here, retrieval supports evidence-driven source analysis instead of improving text generation, thus extending retrieval-augmented modeling to a new application domain.
The next section presents our proposed method in detail.

\begin{figure*}[t]
\centering
\includegraphics[width=\linewidth]{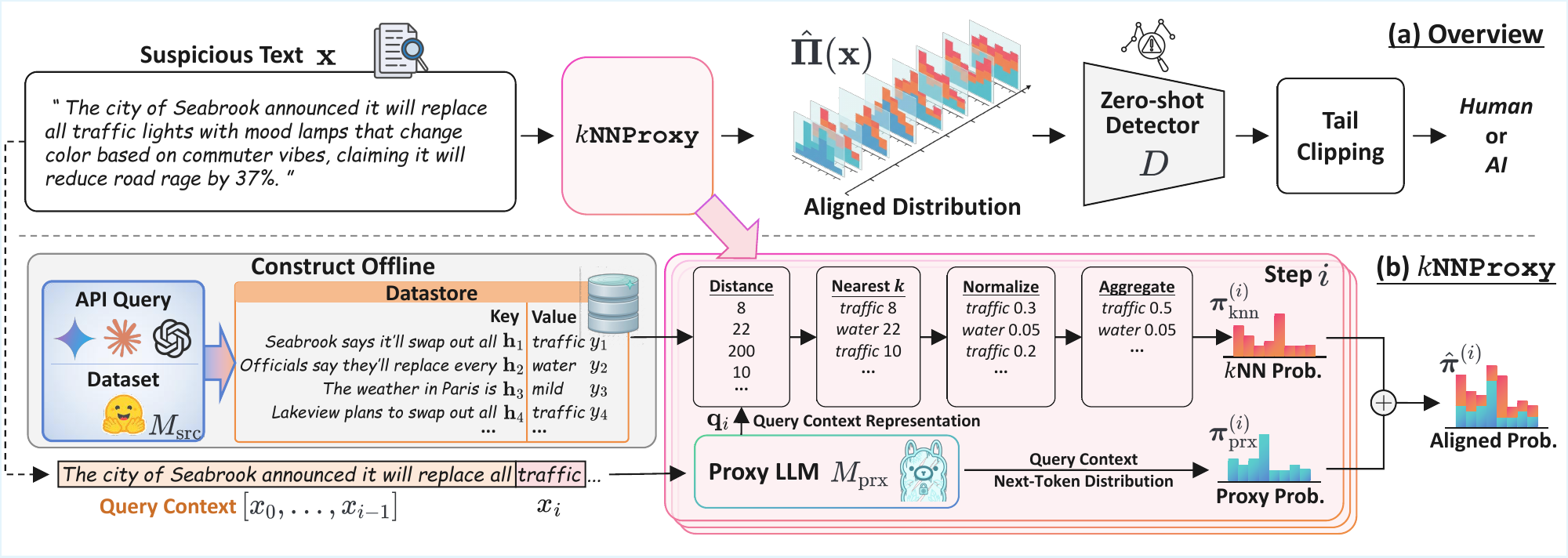}
\vspace*{-5mm}
\caption{
\textbf{Structure of \alg.} (a) The \alg converts a suspicious text into aligned token-level probability, applies a zero-shot detector, and uses tail-clipping aggregation to make a final human-written vs.\ LLM-generated decision. (b) The \alg module aligns a proxy model to a stronger model by retrieving nearest-context examples from an offline datastore and blending the resulting \knn output with the proxy output.
}
\label{fig:model_overview}
\end{figure*}

\section{Methodology}
\label{sec:method}

Fig.~\refs{fig:model_overview}{a} illustrates the overall workflow of \alg.
Given a suspicious text sequence $\mathbf{x}=[x_1,\ldots,x_T]$, \alg produces aligned token-level next-token distributions $\hat{\bm{\Pi}}(\mathbf{x}) = \bigl[\hat{\bm{\pi}}^{(1)},\ldots,\hat{\bm{\pi}}^{(T)}\bigr]$, where each $\hat{\bm{\pi}}^{(i)} \in \Delta^{|\mathcal{V}|-1}$ denotes the aligned prediction at position $i$.
The sequence is fed into an off-the-shelf zero-shot detector, which evaluates the observed token sequence through the aligned token log-likelihoods $\ell_i = \log \hat{\pi}^{(i)}(x_i)$ for $i=1,\dots,T$.
A lower-bound clipping rule over $\{\ell_i\}_{i=1}^{T}$ is used to reduce outlier effects and derive the final decision statistic for distinguishing HWT from LGT.

Fig.~\refs{fig:model_overview}{b} details how \alg aligns a fixed proxy LLM $M_{\mathrm{prx}}$ with a stronger proprietary source LLM $M_{\mathrm{src}}$.
Specifically, a source-reflective corpus is first collected via querying $M_{\mathrm{src}}$ or reusing existing LGT data, followed by constructing a datastore of context--next-token pairs. 
At inference, for each position $i$, we compute the proxy distribution $\bm{\pi}_{\mathrm{prx}}^{(i)}$, retrieve $k$ nearest neighbors from the datastore, and form a retrieval-induced distribution $\bm{\pi}_{\mathrm{knn}}^{(i)}$.
$\bm{\pi}_{\mathrm{prx}}^{(i)}$ and $\bm{\pi}_{\mathrm{knn}}^{(i)}$ are interpolated to produce the aligned distribution $\hat{\bm{\pi}}^{(i)}$.

\subsection{\alg: \knn-LM for Proxy LLM Alignment}
\label{sec:knnlm4align}

The goal of \alg is to approximate the conditional next-token distribution of the unknown source LLM $M_{\mathrm{src}}$ by a retrieval-augmented proxy, without accessing the source model internals.
Formally, at token position $i$, we aim to approximate the source distribution $\bm{\pi}_{\mathrm{src}}^{(i)}=\bm{\pi}_{M_{\mathrm{src}}}(\cdot \mid \mathbf{x}_{<i})$.
To this end, \alg builds an offline datastore from a large corpus generated by, or otherwise reflective of, $M_{\mathrm{src}}$, as illustrated in Fig.~\refs{fig:model_overview}{b}.
This corpus can be obtained either by prompting a black-box LLM through APIs or by leveraging existing LGT datasets.
Importantly, \textbf{\textit{our method does not require the internal parameters or token-level output probabilities of the source LLM}}, which makes it more robust to API changes compared to Glimpse.\footnote{The default Glimpse relies on access to top-probability outputs from GPT-4 APIs, which are no longer available in the current OpenAI API~\cite{glimpse_issue}.}

\subsubsection{Offline Datastore Construction}
Let $\mathcal{C}_{\mathrm{src}}$ denote the collected source-reflective corpus.
Using a sliding window of length $W$, we construct a datastore $\mathcal{I} = \{(\mathbf{h}_n, y_n)\}_{n=1}^{N}$, where $\mathbf{h}_n \in \mathbb{R}^{d}$ is a context representation and $y_n \in \mathcal{V}$ is its corresponding successor token.
Concretely, for each local context window $\mathbf{c}$ extracted from the corpus, we encode it with the fixed proxy model $M_{\mathrm{prx}}$ and denote the resulting feature vector by $\mathbf{h} = \psi(\mathbf{c}; M_{\mathrm{prx}}) \in \mathbb{R}^{d}$, where $\psi(\cdot; M_{\mathrm{prx}})$ is the chosen hidden-state feature extractor of the proxy model.
The paired value is the next token immediately following $\mathbf{c}$.
After scanning the whole corpus, all keys $\{\mathbf{h}_n\}_{n=1}^{N}$ are indexed using FAISS~\cite{johnson2019billion} to enable efficient nearest-neighbor retrieval.

\subsubsection{Retrieval-induced Distribution}
Given a suspicious text $\mathbf{x}$, \alg\ is applied token by token.
At position $i$, we first form the query context $\mathbf{c}_i = [x_0,\ldots,x_{i-1}]$, and compute its query representation $\mathbf{q}_i=\psi(\mathbf{c}_i; M_{\mathrm{prx}}) \in \mathbb{R}^{d}$.
We then retrieve the index set of its $k$ nearest neighbors from the datastore, $\mathcal{N}_k(\mathbf{q}_i) \subseteq \{1,\ldots,N\}$.
For each retrieved entry $j \in \mathcal{N}_k(\mathbf{q}_i)$, let $\mathbf{h}_j$ be the stored key and $y_j$ the stored next token.
We assign a distance-based weight
\begin{equation}\label{eq:weight}
\alpha_{ij}
=
\frac{
\exp\bigl(-\mathrm{dist}(\mathbf{q}_i,\mathbf{h}_j)/\tau\bigr)
}{
\sum_{l \in \mathcal{N}_k(\mathbf{q}_i)}
\exp\bigl(-\mathrm{dist}(\mathbf{q}_i,\mathbf{h}_{l})/\tau\bigr)
},
\ \ \ \  j \in \mathcal{N}_k(\mathbf{q}_i),
\end{equation}
where $\mathrm{dist}(\cdot,\cdot)$ denotes the chosen distance metric and $\tau>0$ is a temperature parameter.
The normalized weights satisfy $\sum_{j \in \mathcal{N}_k(\mathbf{q}_i)} \alpha_{ij}=1$.
Based on the retrieved neighbors, we define the \knn-induced next-token distribution as
\begin{equation}
\pi_{\mathrm{knn}}^{(i)}(v)
=
\sum\nolimits_{j \in \mathcal{N}_k(\mathbf{q}_i)}
\alpha_{ij}\,\mathbbm{1}\{y_j = v\},
\quad v \in \mathcal{V},
\end{equation}
and denote the corresponding probability vector by
\[
\bm{\pi}_{\mathrm{knn}}^{(i)}
=
\bigl[\pi_{\mathrm{knn}}^{(i)}(v)\bigr]_{v \in \mathcal{V}}.
\]

\subsubsection{Aligned Distribution}
Meanwhile, the fixed proxy model $M_{\mathrm{prx}}$ produces its own next-token distribution $\bm{\pi}_{\mathrm{prx}}^{(i)} = \bm{\pi}_{M_{\mathrm{prx}}}(\cdot \mid \mathbf{x}_{<i})$.
We then interpolate the parametric proxy prediction and the non-parametric retrieval prediction:
\begin{equation}
\hat{\bm{\pi}}^{(i)}
=
\lambda_i \bm{\pi}_{\mathrm{prx}}^{(i)}
+
(1-\lambda_i) \bm{\pi}_{\mathrm{knn}}^{(i)},
\qquad \lambda_i \in [0,1].
\label{eq:aligned_dist}
\end{equation}
Here $\lambda_i$ is the interpolation weight.
In the basic \alg formulation, one may simply set $\lambda_i=\lambda$ as a global constant; later in Sec.~\ref{sec:theory} we further generalize it to a token-adaptive form.
Repeating Eq.~\eqref{eq:aligned_dist} for $i=1,\ldots,T$ yields the aligned distribution sequence $\hat{\bm{\Pi}}(\mathbf{x})$.

\subsubsection{Zero-shot Detection with Aligned Distributions}
Because many zero-shot detectors are fundamentally distribution-based, $\hat{\bm{\Pi}}(\mathbf{x})$ serves as a source-aware proxy for the unknown source model and can therefore strengthen black-box LGT detection without proxy fine-tuning or repeated API calls at test time.
Below we reformulate Fast-DetectGPT and Binoculars using the aligned distributions.
For convenience, define the aligned token log-likelihood score of any text $\mathbf{x}=[x_1,\ldots,x_{|\mathbf{x}|}]$ as
\begin{equation}
s_{\mathrm{alg}}(\mathbf{x})
=
\sum\nolimits_{i=1}^{|\mathbf{x}|}
\log \hat{\pi}^{(i)}(x_i).
\end{equation}

\paragraph{Fast-DetectGPT with \alg}
Let $M_{\mathrm{ref}}$ be the reference model used by Fast-DetectGPT.
Denote by $P_{M_{\mathrm{ref}}}(\tilde{\mathbf{x}}\mid \mathbf{x})$ the conditional sampling distribution used by the reference model to generate comparison samples $\tilde{\mathbf{x}}$ from the same context as $\mathbf{x}$.
We define
\begin{equation}
D_{\mathrm{Fast}}(\mathbf{x}; M_{\mathrm{ref}})
=
\frac{
s_{\mathrm{alg}}(\mathbf{x}) - \mu_{\mathrm{ref}}(\mathbf{x})
}{
\sigma_{\mathrm{ref}}(\mathbf{x})
},
\end{equation}
where
\begin{equation}
\begin{aligned}
\mu_{\mathrm{ref}}(\mathbf{x}) 
&= \mathbb{E}_{\tilde{\mathbf{x}} \sim P_{M_{\mathrm{ref}}}(\cdot \mid \mathbf{x})}[s_{\mathrm{alg}}(\tilde{\mathbf{x}})], \\
\sigma_{\mathrm{ref}}^{2}(\mathbf{x}) 
&= \mathbb{E}_{\tilde{\mathbf{x}} \sim P_{M_{\mathrm{ref}}}(\cdot \mid \mathbf{x})}\left[(s_{\mathrm{alg}}(\tilde{\mathbf{x}})-\mu_{\mathrm{ref}}(\mathbf{x}))^2\right].
\end{aligned}
\end{equation}

\paragraph{Binoculars with \alg}
Binoculars compares the self-likelihood under the scoring model against the cross-entropy induced by a reference model.
With the aligned scoring distribution, define the per-token negative log-likelihood
\begin{equation}
\mathsf{NLL}_{\mathrm{alg}}(\mathbf{x})
=
-\frac{1}{T}
\sum\nolimits_{i=1}^{T}
\log \hat{\pi}^{(i)}(x_i),
\end{equation}
and let $\bm{\pi}_{\mathrm{ref}}^{(i)} = \bm{\pi}_{M_{\mathrm{ref}}}(\cdot \mid \mathbf{x}_{<i})$ be the next-token distribution of the reference model at position $i$.
We define the aligned cross-entropy with respect to the reference model as
\begin{equation}
\mathsf{H}_{\mathrm{alg},\mathrm{ref}}(\mathbf{x})
=
-\frac{1}{T}
\sum\nolimits_{i=1}^{T}
\sum\nolimits_{v \in \mathcal{V}}
\hat{\pi}^{(i)}(v)\log \pi_{\mathrm{ref}}^{(i)}(v).
\end{equation}
Accordingly, the Binoculars score becomes
\begin{equation}
D_{\mathrm{Bino}}(\mathbf{x}; M_{\mathrm{ref}})
=
\frac{
\exp\bigl(\mathsf{NLL}_{\mathrm{alg}}(\mathbf{x})\bigr)
}{
\exp\bigl(\mathsf{H}_{\mathrm{alg},\mathrm{ref}}(\mathbf{x})\bigr)
}.
\end{equation}

The resulting detector score $D$ is finally compared against a predefined threshold to make the HWT/LGT decision.

\subsection{\algm: Multi-Domain Proxy Alignment} 
\label{sec:moknnproxy}

\begin{figure}[t]
\centering
\includegraphics[width=\linewidth]{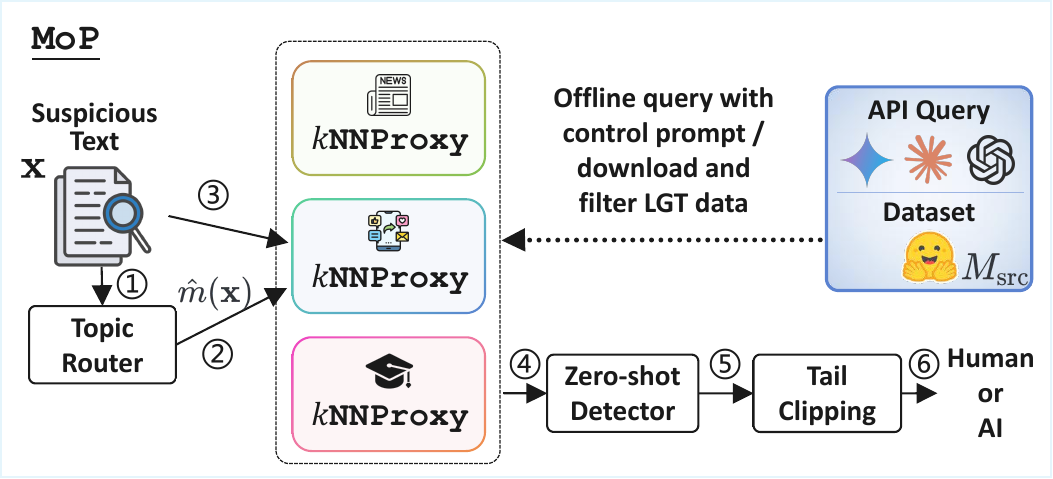}
\vspace*{-5mm}
\caption{
\textbf{Schematic diagram of \algm.} Mixture of proxies involves multiple domain-specific \alg experts, and a lightweight router selects the most relevant expert for each input text.
}
\label{fig:moe}
\end{figure}

While a single \alg effectively captures domain-specific patterns for LGT detection, it often underperforms on OOD samples due to limited generalization. 
To improve robustness under domain shift, we extend \alg to an MoE variant, \algm, where multiple domain-specific datastores are constructed offline and a lightweight router selects the most compatible expert.
As shown in Fig.~\ref{fig:moe}, each expert corresponds to a domain, defined by topic (e.g., academic writing, social media, or news) or source-model family.
Since datastore construction in \alg is training-free and modular, such multi-domain extension is straightforward.

Formally, let $\{\mathcal{I}_m\}_{m=1}^{M}$ be $M$ domain-specific datastores, where $\mathcal{I}_m$ is built from texts belonging to domain $m$ and instantiates the $m$-th \alg\ expert.
To select the most compatible expert for a test sample, we build an auxiliary routing datastore.
Specifically, from all domain corpora we collect a sentence--domain set $\{(\mathbf{s}_n, z_n)\}_{n=1}^{N_{\mathrm{r}}}$, where $\mathbf{s}_n$ is a sentence and $z_n \in \{1,\ldots,M\}$ is its domain label.
Using an off-the-shelf sentence embedding model $\phi$, we compute $\mathbf{e}_n = \phi(\mathbf{s}_n) \in \mathbb{R}^{d_{\mathrm{r}}}$, and form the routing datastore $\mathcal{I}_{\mathrm{r}} = \{(\mathbf{e}_n, z_n)\}_{n=1}^{N_{\mathrm{r}}}$.
Given a suspicious text $\mathbf{x}$, we first compute its sentence embedding $\mathbf{e}(\mathbf{x}) = \phi(\mathbf{x})$, and retrieve its $k_{\mathrm{r}}$ nearest neighbors in the routing datastore:
\begin{equation}
\mathcal{N}^{\mathrm{r}}_{k_{\mathrm{r}}}\bigl(\mathbf{e}(\mathbf{x})\bigr)
\subseteq
\{1,\ldots,N_{\mathrm{r}}\}.
\end{equation}
We then estimate an empirical routing score for each domain $m$ by majority voting:
\begin{equation}
\omega_m(\mathbf{x})
=
\frac{1}{k_{\mathrm{r}}}
\sum\nolimits_{n \in \mathcal{N}^{\mathrm{r}}_{k_{\mathrm{r}}}(\mathbf{e}(\mathbf{x}))}
\mathbbm{1}\{z_n = m\},
\end{equation}
where $m=1,\dots,M$. The routed expert index is chosen as
\begin{equation}
\hat{m}(\mathbf{x})
=
\argmax_{m \in \{1,\ldots,M\}}
\omega_m(\mathbf{x}).
\end{equation}
Finally, we apply the expert associated with $\mathcal{I}_{\hat{m}(\mathbf{x})}$ to produce aligned token distributions and perform downstream detection.

Although one could organize experts by source LLM, we prefer domain/topic partitioning in practice.
The reason is that topic structure is often more stable than the rapidly evolving set of commercial LLM versions, and topic-specific retrieval tends to transfer better across unseen or updated source models.
As demonstrated in Sec.~\ref{sec:experiment}, this design yields accurate routing and improved OOD robustness for LGT detection.

\subsection{Robust Lower-Tail Clipping for Detection Stability}
\label{sec:tailclip}

\begin{figure}[tb] 
\centering 
\includegraphics[width=\linewidth]{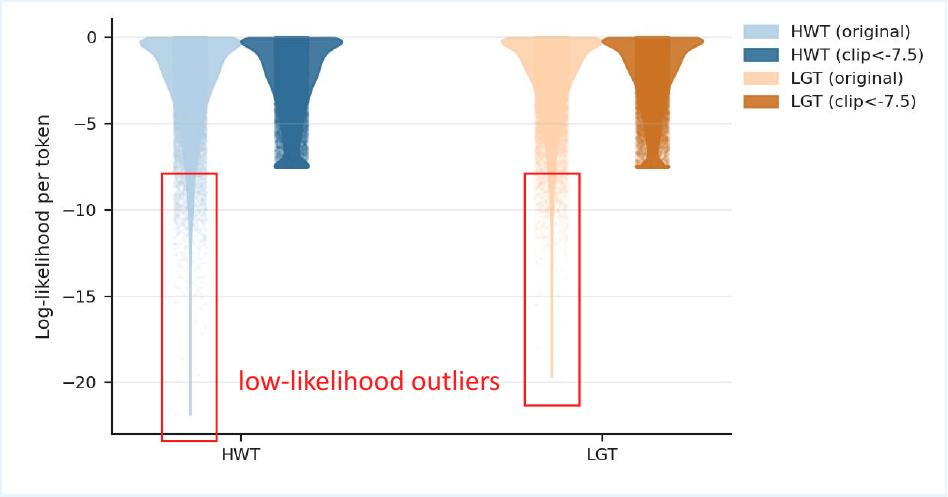} 
\vspace*{-5mm}
\caption{\textbf{Per-token log-likelihood distributions} for 100 human-written and GPT-4-generated texts using a GPT-Neo-2.7B proxy, w/wo tail clipping.} \label{fig:tail_clip} 
\end{figure}

Beyond improving proxy--source alignment, we further stabilize zero-shot detection by replacing the standard arithmetic mean over token scores with a robust lower-bound clipping strategy.
Many existing detectors, e.g., likelihood~\cite{ge2019gltr}, Fast-DetectGPT~\cite{bao2024fast}, and Binoculars~\cite{hans2024spotting}, aggregate token-level evidence by averaging.
However, token log-likelihood sequences often contain extreme outliers for both HWT and LGT.
As shown in Fig.~\ref{fig:tail_clip}, low-likelihood outliers frequently arise from tokens poorly modeled by the proxy, and can dominate the aggregated score, thereby degrading separability.

To mitigate the effect of outliers, we clip each aligned token log-likelihood from below before averaging.
Fig.~\ref{fig:tail_clip} indicates that the dominant outliers are concentrated in the low-likelihood tail; clipping this tail effectively prevents a few extreme tokens from overwhelming the sequence-level score.
Formally, for $\ell_i = \log \hat{\pi}^{(i)}(x_i)$, we define the clipped score
\begin{equation}
\tilde{\ell}_i = \max(\ell_i, \gamma),
\end{equation}
where $\gamma$ is a lower bound.
The final sequence statistic is then computed as
\begin{equation}
\bar{\ell}_{\gamma}
=
\frac{1}{T}
\sum\nolimits_{i=1}^{T}
\tilde{\ell}_i.
\label{eq:clipped_mean}
\end{equation}
In our experiments, we use a fixed default lower bound $\gamma$.
This operation preserves most token-level evidence while suppressing only pathological low-likelihood spikes, and can be directly applied to different zero-shot scoring rules with negligible overhead.

\section{Theoretical Analysis of Distribution Discrepancy and Adaptive Parameters in \alg}
\label{sec:theory}

We now analyze how accurately the retrieval-induced distribution in \alg approximates the unknown source distribution.
In Sec.~\ref{sec:errbound}, we first derive a high-probability bound for the discrepancy between the source next-token distribution and the \knn-induced distribution at a fixed token position.
We then use the bound to extract bias and variance surrogates that motivate token-adaptive hyperparameters in Sec.~\ref{sec:adapt_param}.

\subsection{Error Bound of \alg}
\label{sec:errbound}

\textbf{Setup.}
Fix a query representation $\mathbf{q}\in\mathbb{R}^{d}$.
Let $\mathcal{I}=\{(\mathbf{h}_n,y_n)\}_{n=1}^{N}$ be the datastore, where $\mathbf{h}_n\in\mathbb{R}^{d}$ is a context representation and $y_n\in\mathcal{V}$ is its successor token.
For a neighborhood size $k$, let $\mathcal{N}_k(\mathbf{q})\subseteq\{1,\ldots,N\}$ denote the indices of the $k$ nearest neighbors of $\mathbf{q}$ under $L_2$ distance.
Following Eq.~\eqref{eq:weight}, the normalized retrieval weights are
\begin{equation}
\alpha_j(\mathbf{q})
=
\frac{
\exp\left(-\|\mathbf{q}-\mathbf{h}_j\|_2/\tau\right)
}{
\sum\nolimits_{l\in\mathcal{N}_k(\mathbf{q})}
\exp\left(-\|\mathbf{q}-\mathbf{h}_l\|_2/\tau\right)
}, \ \  j\in\mathcal{N}_k(\mathbf{q}),
\label{eq:theory_alpha}
\end{equation}
where $\tau>0$ is the temperature parameter.
The corresponding \knn-induced next-token distribution is
\begin{equation}
\pi_{\mathrm{knn}}(v\mid \mathbf{q})
=
\sum\nolimits_{j\in\mathcal{N}_k(\mathbf{q})}
\alpha_j(\mathbf{q})\,\mathbbm{1}\{y_j=v\},
\ \  v\in\mathcal{V}.
\label{eq:theory_pknn}
\end{equation}

To quantify the locality and weight concentration, define
\begin{equation}
k_{\mathrm{eff}}(\mathbf{q})
 \coloneq 
\frac{1}{\sum\nolimits_{j\in\mathcal{N}_k(\mathbf{q})}\alpha_j(\mathbf{q})^2}
\in [1,k],
\label{eq:theory_keff}
\end{equation}
and the effective neighborhood radius
\begin{equation}
r_{\mathrm{eff}}(\mathbf{q})
 \coloneq 
\sum\nolimits_{j\in\mathcal{N}_k(\mathbf{q})}
\alpha_j(\mathbf{q})\,\|\mathbf{q}-\mathbf{h}_j\|_2.
\label{eq:theory_reff}
\end{equation}
We also define the conditional mean estimator
\begin{equation}
\begin{aligned}
\bar{\pi}_{\mathrm{knn}}(v\mid \mathbf{q})
 \coloneq  {}&
\mathbb{E}\left[
\pi_{\mathrm{knn}}(v\mid \mathbf{q})
\,\middle|\,
\{\mathbf{h}_j\}_{j\in\mathcal{N}_k(\mathbf{q})}
\right]\\
= {}&
\sum\nolimits_{j\in\mathcal{N}_k(\mathbf{q})}
\alpha_j(\mathbf{q})\,\pi_{\mathrm{src}}(v\mid \mathbf{h}_j),
\end{aligned}
\label{eq:theory_condmean}
\end{equation}
where $\bm{\pi}_{\mathrm{src}}(\cdot\mid \mathbf{h})$ denotes the source next-token distribution at representation $\mathbf{h}$.
For brevity, below we write $V\coloneq|\mathcal{V}|$ and
\[
\mathcal{N} \coloneq \mathcal{N}_k(\mathbf{q}), \ 
\alpha_j \coloneq \alpha_j(\mathbf{q}), \ 
k_{\mathrm{eff}} \coloneq k_{\mathrm{eff}}(\mathbf{q}), \ 
r_{\mathrm{eff}} \coloneq r_{\mathrm{eff}}(\mathbf{q}).
\]

\begin{assumption}[Local Smoothness and Conditional Sampling]
\label{ass:theory_main}
The source next-token distribution satisfies:
\begin{enumerate}[leftmargin=*]
\item \textbf{$L_1$-Lipschitz Continuity:}
\[
\bigl\|
\bm{\pi}_{\mathrm{src}}(\cdot\mid \mathbf{h}')
-
\bm{\pi}_{\mathrm{src}}(\cdot\mid \mathbf{h})
\bigr\|_1
\le
L\,\|\mathbf{h}'-\mathbf{h}\|_2,
\quad
\forall\,\mathbf{h},\mathbf{h}'\in\mathbb{R}^{d}.
\]
\item \textbf{Conditional Independence of Retrieved Tokens:}
conditioned on the retrieved keys $\{\mathbf{h}_j\}_{j\in\mathcal N}$, the tokens $\{y_j\}_{j\in\mathcal N}$ are independent and satisfy
\[
y_j \sim \bm{\pi}_{\mathrm{src}}(\cdot\mid \mathbf{h}_j),
\quad j\in\mathcal N.
\]
\end{enumerate}
\end{assumption}

Assum.~\ref{ass:theory_main} is standard and mild in our setting.
Assum.~\refs{ass:theory_main}{.1} states that nearby context representations should induce similar source distributions, which is exactly the smoothness needed for local retrieval to be meaningful.
Assum.~\refs{ass:theory_main}{.2} matches the datastore construction process: once the retrieved contexts are fixed, their next tokens are treated as draws from the corresponding source conditional distributions.

\begin{theorem}[$L_1$ Bound for Weighted \knn Retrieval]
\label{thm:tv_bound}
Under Assum.~\ref{ass:theory_main}, for any $\delta\in(0,1)$, with probability at least $1-\delta$,
\begin{equation}
\bigl\|
\bm{\pi}_{\mathrm{src}}(\cdot\mid \mathbf{q})
-
\bm{\pi}_{\mathrm{knn}}(\cdot\mid \mathbf{q})
\bigr\|_1
\le
Lr_{\mathrm{eff}}
+
V\sqrt{\frac{\log(2V/\delta)}{2\,k_{\mathrm{eff}}}}.
\label{eq:error_bound}
\end{equation}
\end{theorem}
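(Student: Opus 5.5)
The plan is to split the error into a bias term and a fluctuation term, using the conditional mean estimator $\bar{\pi}_{\mathrm{knn}}$ of Eq.~\eqref{eq:theory_condmean} as the pivot. By the triangle inequality,
\[
\bigl\|\bm{\pi}_{\mathrm{src}}(\cdot\mid\mathbf{q})-\bm{\pi}_{\mathrm{knn}}(\cdot\mid\mathbf{q})\bigr\|_1
\le
\bigl\|\bm{\pi}_{\mathrm{src}}(\cdot\mid\mathbf{q})-\bar{\bm{\pi}}_{\mathrm{knn}}(\cdot\mid\mathbf{q})\bigr\|_1
+
\bigl\|\bar{\bm{\pi}}_{\mathrm{knn}}(\cdot\mid\mathbf{q})-\bm{\pi}_{\mathrm{knn}}(\cdot\mid\mathbf{q})\bigr\|_1 .
\]
I will bound the first (deterministic) term by $Lr_{\mathrm{eff}}$ and the second (random) term by the concentration term. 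Everything is conditioned on the retrieved keys $\{\mathbf{h}_j\}_{j\in\mathcal N}$. Since the weights $\alpha_j$ depend only on those keys, they are fixed under this conditioning, and the resulting probability statement then holds unconditionally as well.

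\textbf{Bias term.} Because $\sum_{j\in\mathcal N}\alpha_j=1$, we can write $\bm{\pi}_{\mathrm{src}}(\cdot\mid\mathbf{q})-\bar{\bm{\pi}}_{\mathrm{knn}}=\sum_j\alpha_j\bigl(\bm{\pi}_{\mathrm{src}}(\cdot\mid\mathbf{q})-\bm{\pi}_{\mathrm{src}}(\cdot\mid\mathbf{h}_j)\bigr)$. Applying convexity of the $L_1$ norm and then Assum.~\ref{ass:theory_main}.1 gives the bound $\sum_j\alpha_j L\|\mathbf{q}-\mathbf{h}_j\|_2=Lr_{\mathrm{eff}}$, using definition \eqref{eq:theory_reff}.

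\textbf{Variance term.} Fix a token $v$. The quantity $\pi_{\mathrm{knn}}(v\mid\mathbf{q})-\bar{\pi}_{\mathrm{knn}}(v\mid\mathbf{q})=\sum_j\alpha_j\bigl(\mathbbm{1}\{y_j=v\}-\pi_{\mathrm{src}}(v\mid\mathbf{h}_j)\bigr)$ is a sum of independent, zero-mean terms by Assum.~\ref{ass:theory_main}.2. The $j$-th term lies in an interval of length $\alpha_j$. Hoeffding's inequality therefore gives
\[
\Pr\Bigl(|\pi_{\mathrm{knn}}(v)-\bar{\pi}_{\mathrm{knn}}(v)|\ge t\Bigr)\le 2\exp\Bigl(-\tfrac{2t^2}{\sum_j\alpha_j^2}\Bigr)=2\exp(-2k_{\mathrm{eff}}t^2),
\]
where the equality uses definition \eqref{eq:theory_keff}. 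Setting the right-hand side to $\delta/V$ gives $t=\sqrt{\log(2V/\delta)/(2k_{\mathrm{eff}})}$. A union bound over the $V$ tokens then shows that, with probability at least $1-\delta$, every coordinate deviation is at most $t$. Summing over the coordinates bounds the $L_1$ norm by $Vt$, which is exactly the second term of \eqref{eq:error_bound}. Combining this with the bias bound completes the proof.

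\textbf{Main obstacle.} The delicate point is justifying the conditioning. The neighbor set $\mathcal N$ and the weights are determined by the keys alone, so conditioning on the keys is legitimate. This relies on Assum.~\ref{ass:theory_main}.2 being stated conditionally on the retrieved keys, which sidesteps any dependence between retrieval and the labels. The second subtlety is getting the Hoeffding range right: the weighted indicator deviation must lie in an interval of length $\alpha_j$, so that $\sum_j\alpha_j^2$ turns into $1/k_{\mathrm{eff}}$. Everything else is a routine union bound. The factor $V$, rather than a sharper $\sqrt{V}$ or a multinomial-concentration constant, comes from the crude coordinate-wise summation, which is acceptable for the stated bound.
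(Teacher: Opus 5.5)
Your proposal is correct and follows the paper's proof step for step. It uses the same triangle-inequality split around $\bar{\bm{\pi}}_{\mathrm{knn}}(\cdot\mid\mathbf{q})$, the same convexity-plus-Lipschitz bias bound $Lr_{\mathrm{eff}}$, and the same weighted Hoeffding bound $2\exp(-2k_{\mathrm{eff}}t^2)$ with a union bound over the $V$ coordinates; the paper phrases that union bound as ``an $L_1$ deviation of at least $\varepsilon$ forces some coordinate deviation of at least $\varepsilon/V$'', which is equivalent to your ``every coordinate is at most $t$, so the sum is at most $Vt$''.
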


\begin{proof}

Let $\bar{\bm{\pi}}_{\mathrm{knn}}(\cdot\mid\mathbf{q})$ denote the vector form of $\bar{\pi}_{\mathrm{knn}}(v\mid\mathbf{q})$.
By the triangle inequality,
\begin{equation}
\begin{aligned}
\bigl\|
\bm{\pi}_{\mathrm{src}}(\cdot\mid \mathbf{q})
-
\bm{\pi}_{\mathrm{knn}}(\cdot\mid \mathbf{q})
\bigr\|_1
\le
 \underbrace{
\bigl\|
\bm{\pi}_{\mathrm{src}}(\cdot\mid \mathbf{q})
-
\bar{\bm{\pi}}_{\mathrm{knn}}(\cdot\mid \mathbf{q})
\bigr\|_1
}_{\text{bias}}\\
+
\underbrace{
\bigl\|
\bar{\bm{\pi}}_{\mathrm{knn}}(\cdot\mid \mathbf{q})
-
\bm{\pi}_{\mathrm{knn}}(\cdot\mid \mathbf{q})
\bigr\|_1
}_{\text{variance}}.
\end{aligned}
\label{eq:bv_split}
\end{equation}

For the bias term, Eq.~\eqref{eq:theory_condmean} and Assum.~\refs{ass:theory_main}{.1} give
\begin{equation}
\begin{aligned}
&\quad \ \bigl\|
\bm{\pi}_{\mathrm{src}}(\cdot\mid \mathbf{q})
-
\bar{\bm{\pi}}_{\mathrm{knn}}(\cdot\mid \mathbf{q})
\bigr\|_1\\
&=
\left\|
\sum\nolimits_{j\in\mathcal N}
\alpha_j
\bigl(
\bm{\pi}_{\mathrm{src}}(\cdot\mid \mathbf{q})
-
\bm{\pi}_{\mathrm{src}}(\cdot\mid \mathbf{h}_j)
\bigr)
\right\|_1 \\
&\le
\sum\nolimits_{j\in\mathcal N}
\alpha_j
\bigl\|
\bm{\pi}_{\mathrm{src}}(\cdot\mid \mathbf{q})
-
\bm{\pi}_{\mathrm{src}}(\cdot\mid \mathbf{h}_j)
\bigr\|_1 \\
&\le
L\sum\nolimits_{j\in\mathcal N}\alpha_j\|\mathbf{q}-\mathbf{h}_j\|_2
=
L\,r_{\mathrm{eff}}.
\end{aligned}
\label{eq:bias_bound}
\end{equation}

For the variance term, fix $v\in\mathcal V$ and define
\[
\xi_j^{(v)}
 \coloneq 
\mathbbm{1}\{y_j=v\}
-
\pi_{\mathrm{src}}(v\mid \mathbf{h}_j),
\quad j\in\mathcal{N}.
\]
Conditioned on $\{\mathbf{h}_j\}_{j\in\mathcal N}$, variables $\{\xi_j^{(v)}\}_{j\in\mathcal N}$ are independent, mean-zero, and each lies in an interval of length $1$.
Since
\[
\pi_{\mathrm{knn}}(v\mid \mathbf{q})-\bar{\pi}_{\mathrm{knn}}(v\mid \mathbf{q})
=
\sum\nolimits_{j\in\mathcal N}\alpha_j\,\xi_j^{(v)},
\]
Hoeffding's inequality for weighted sums gives
\begin{equation}
\begin{aligned}
& \quad\ \Pr\big(
|
\pi_{\mathrm{knn}}(v\mid \mathbf{q})
-
\bar{\pi}_{\mathrm{knn}}(v\mid \mathbf{q})
|
\ge t
\,\big|\,
\{\mathbf{h}_j\}_{j\in\mathcal N}
\big)\\
&\le
2\exp\left(
-\frac{2t^2}{\sum\nolimits_{j\in\mathcal{N}}\alpha_j^2}
\right)
=
2\exp\bigl(-2k_{\mathrm{eff}}t^2\bigr).
\label{eq:hoeffding_weighted}
\end{aligned}
\end{equation}
If $
\bigl\|
\bm{\pi}_{\mathrm{knn}}(\cdot\mid \mathbf{q})
-
\bar{\bm{\pi}}_{\mathrm{knn}}(\cdot\mid \mathbf{q})
\bigr\|_1
\ge \varepsilon$,
then at least one coordinate must satisfy
\[
\left|
\pi_{\mathrm{knn}}(v\mid \mathbf{q})
-
\bar{\pi}_{\mathrm{knn}}(v\mid \mathbf{q})
\right|
\ge \frac{\varepsilon}{V}.
\]
Applying a union bound over $v\in\mathcal V$ yields
\begin{equation}
\begin{aligned}
&\Pr\left(
\bigl\|
\bm{\pi}_{\mathrm{knn}}(\cdot\mid \mathbf{q})
-
\bar{\bm{\pi}}_{\mathrm{knn}}(\cdot\mid \mathbf{q})
\bigr\|_1
\ge \varepsilon
\,\middle|\,
\{\mathbf{h}_j\}_{j\in\mathcal N}
\right)\\
\le {}&
\sum\nolimits_{v\in\mathcal{V}}
\Pr\left(
\left|
\pi_{\mathrm{knn}}(v\mid \mathbf{q})
-
\bar{\pi}_{\mathrm{knn}}(v\mid \mathbf{q})
\right|
\ge \frac{\varepsilon}{V}
\,\middle|\,
\{\mathbf{h}_j\}_{j\in\mathcal N}
\right) \\
\le{} &
2V\exp\left(
-\frac{2k_{\mathrm{eff}}\varepsilon^2}{V^2}
\right).
\label{eq:union}
\end{aligned}
\end{equation}
Setting the RHS to $\delta$ gives, with probability at least $1-\delta$,
\begin{equation}
\bigl\|
\bm{\pi}_{\mathrm{knn}}(\cdot\mid \mathbf{q})
-
\bar{\bm{\pi}}_{\mathrm{knn}}(\cdot\mid \mathbf{q})
\bigr\|_1
\le
V\sqrt{\frac{\log(2V/\delta)}{2\,k_{\mathrm{eff}}}}.
\label{eq:variance_bound}
\end{equation}
Combining Eqs.~\eqref{eq:bv_split}, \eqref{eq:bias_bound}, and \eqref{eq:variance_bound} proves Eq.~\eqref{eq:error_bound}.
\end{proof}

\begin{remark}
Thm.~\ref{thm:tv_bound} separates the retrieval error into a local bias term $Lr_{\mathrm{eff}}$ and a variance term of order $k_{\mathrm{eff}}^{-1/2}$.
It therefore suggests that retrieval is most accurate when the selected neighbors are both close to the query and not overly concentrated on a few samples.
These two quantities will be the key proxies for designing adaptive hyperparameters in Sec.~\ref{sec:adapt_param}.
\end{remark}

\subsection{Token-adaptive \alg Hyperparameters}
\label{sec:adapt_param}

Thm.~\ref{thm:tv_bound} shows that the retrieval error at query $\mathbf q$ is governed by two local quantities: the effective radius $r_{\mathrm{eff}}(\mathbf q)$ and the effective neighborhood size $k_{\mathrm{eff}}(\mathbf q)$.
This suggests that a single global choice of hyperparameters may be suboptimal across different token positions.
Intuitively, in dense regions, using more neighbors and smoother weights can reduce variance with only a limited increase in bias.
In sparse regions, by contrast, large neighborhoods or overly smooth weights may place nontrivial mass on distant and less relevant keys, thereby increasing bias.
Motivated by this trade-off, we make the neighborhood size $k$, the retrieval temperature $\tau$, and the interpolation weight $\lambda$ adaptive to each query representation.

\textbf{Adaptive $k$ and $\tau$.}
Guided by Eq.~\eqref{eq:error_bound}, we define the following surrogate for the local retrieval error:
\begin{equation}
\mathcal{U}(k,\tau;\mathbf q)
\coloneq
c\,r_{\mathrm{eff}}(k,\tau;\mathbf q)
+
\frac{1}{\sqrt{k_{\mathrm{eff}}(k,\tau;\mathbf q)}},
\label{eq:ret_surrogate}
\end{equation}
where $c>0$ is a global calibration coefficient.
This objective mirrors the bias--variance structure in Thm.~\ref{thm:tv_bound}: the first term penalizes nonlocal retrieval, whereas the second penalizes overly concentrated weights.
Since the constants in Eq.~\eqref{eq:error_bound}, such as $L$, $V$, and $\delta$, are either unknown or fixed across queries, we absorb their relative scaling into $c$.

We then select the token-wise retrieval hyperparameters by
\begin{equation}
(k^{*}(\mathbf q),\tau^{*}(\mathbf q))
\in
\argmin_{k\in\mathcal K,\tau\in\mathcal T}
\mathcal{U}(k,\tau;\mathbf q),
\label{eq:ktau_star}
\end{equation}
where $\mathcal K$ and $\mathcal T$ are predefined candidate sets.
In practice, for each query we retrieve the top-$k_{\max}$ neighbors only once, and then evaluate Eq.~\eqref{eq:ktau_star} over all candidate pairs $(k,\tau)$ using truncated prefixes of this ranked list.
Consequently, stable local regions tend to favor larger $k$ and smoother weighting, while sparse or unreliable regions prefer smaller neighborhoods and sharper weighting.

\textbf{Adaptive $\lambda$.}
After determining $k^{*}(\mathbf q)$ and $\tau^{*}(\mathbf q)$, we use the resulting local retrieval quality to adapt the interpolation weight in Eq.~\eqref{eq:aligned_dist}.
Recall that $\lambda$ weights the proxy distribution $\bm{\pi}_{\mathrm{prx}}^{(i)}$, while $1-\lambda$ weights the retrieval distribution $\bm{\pi}_{\mathrm{knn}}^{(i)}$.
Therefore, a more reliable retrieval estimate should lead to a \emph{smaller} $\lambda$.
Using the optimized retrieval parameters, define
\begin{equation}
\mathcal{U}^{*}(\mathbf q)
\coloneq
\mathcal{U}\bigl(k^{*}(\mathbf q),\tau^{*}(\mathbf q);\mathbf q\bigr).
\label{eq:u_star}
\end{equation}
We then convert this quantity into an adaptive interpolation weight through a sigmoid mapping:
\begin{equation}
\lambda^{*}(\mathbf q)
=
\sigma\bigl(
\mathcal{U}^{*}(\mathbf q)
-
\operatorname{median}\{\mathcal{U}^{*}(\mathbf q'):\mathbf q'\in\mathcal{Q}(\mathbf x)\}
\bigr),
\label{eq:lambda_star}
\end{equation}
where $\sigma(\cdot)$ is the sigmoid function, $\operatorname{median}$ denotes the median operator, and $\mathcal{Q}(\mathbf x)=\{\mathbf q_1,\ldots,\mathbf q_T\}$ is the set of query representations extracted from the input text $\mathbf x$.
As a result, tokens with below-median surrogate error receive smaller $\lambda^{*}$ and rely more on $\bm{\pi}_{\mathrm{knn}}$, whereas tokens with above-median surrogate error receive larger $\lambda^{*}$ and fall back more heavily on $\bm{\pi}_{\mathrm{prx}}$.
This yields a simple and effective token-adaptive instantiation of \alg.

\begin{table*}[t]
\centering
\caption{Summary of benchmarks in our experiments.}
\label{tab:benchmark}
\scalebox{0.75}{%
\begin{tabular}{l l l r r l}
\toprule
\textbf{Benchmark} & \textbf{Domains (Topics)} & \textbf{Source LLMs} & \textbf{\#HWT} & \textbf{\#LGT} & \textbf{Notes} \\
\midrule

Mix8
& \makecell[l]{arXiv (academic) \\ XSum (news) \\ WritingPrompts (story) \\ PubMedQA (technical QA)}
& \makecell[l]{GPT-3.5, GPT-4, Claude-3 Sonnet, \\ Claude-3 Opus, Gemini-1.5 Pro, \\ GPT-4 Turbo, Claude-3.7 Sonnet, \\ Gemini-2.0 Flash}
& 900
& 3600
& \makecell[l]{Mix8 builds from the Glimpse~\cite{bao2025glimpse} and \\ DNA-DetectLLM~\cite{zhu2025dna} datasets, covering \\ 4 domains and 8 proprietary LLMs.} \\
\midrule
DetectRL
& \makecell[l]{arXiv (academic) \\ XSum (news) \\ WritingPrompts (story) \\ Yelp Reviews (social)}
& \makecell[l]{GPT-3.5 Turbo, \\ PaLM-2-Bison, \\ Claude-Instant, \\ Llama-2-70B}
& 4016
& 4016
& \makecell[l]{DetectRL~\cite{wu2024detectrl} includes prompt-based, \\ paraphrasing, and perturbation attacks \\ on the LGTs, covering 4 domains and 4 LLMs.} \\

\bottomrule
\end{tabular}%
}
\end{table*}

\section{Experiments}\label{sec:experiment}


\definecolor{bestcol}{RGB}{255,210,90}
\definecolor{secondcol}{RGB}{180,220,255}
\newcommand{\best}[1]{\cellcolor{bestcol}\textbf{#1}} 
\newcommand{\second}[1]{\cellcolor{secondcol}\underline{#1}}      

\DeclareRobustCommand{\bestcap}[1]{%
  \begingroup\setlength{\fboxsep}{1pt}\colorbox{bestcol}{\strut\textbf{#1}}\endgroup}
\DeclareRobustCommand{\secondcap}[1]{%
  \begingroup\setlength{\fboxsep}{1pt}\colorbox{secondcol}{\strut \underline{#1}}\endgroup}

\subsection{Settings} 
\label{sec:expsetup}

We evaluate \alg on the Mix8~\cite{bao2025glimpse, zhu2025dna} and DetectRL~\cite{wu2024detectrl} benchmarks, with Table~\ref{tab:benchmark} summarizing the benchmark details. All experiments are conducted in the black-box setting, where the proxy LLM used by the detector is different from the source LLM that generated the test samples. We report the AUROC and F1 score (higher the better), which are standard metrics in LLM-generated text detection~\cite{wu2024detectrl, bao2025glimpse}.

\begin{table*}[htbp!]
\centering
\caption{AUROC comparisons of black-box zero-shot LGT detection methods w/wo proxy alignment on Mix8. 
}
\label{tab:mix8}
\setlength{\tabcolsep}{5.25pt}
\scalebox{0.75}{%
\begin{threeparttable}
\begin{tabular}{llccccccccl}

\toprule

\textbf{Method} & \textbf{Proxy LLM} & \small  GPT-3.5 & \small GPT-4 & \small \makecell{Claude-3\\Sonnet} & \small \makecell{Claude-3\\Opus} & \small \makecell{Gemini-1.5 \\ Pro} & \small \makecell{GPT-4 \\ Turbo} & \small \makecell{Claude-3.7 \\ Sonnet} & \small  \makecell{Gemini-2.0 \\ Flash}   & \textsc{\textbf{Avg.}} \gainraw{Gain} \\

\specialrule{0.05em}{0.4ex}{0ex}

\rowcolor{gray!15} \multicolumn{11}{l}{\textit{\textbf{Zero-shot Detector (w/o Proxy Alignment)}}} \\ 
\rule{0pt}{10pt}Likelihood~\cite{ge2019gltr}            & GPT-Neo-2.7B                      & 0.907 & 0.769 & 0.866 & 0.903 & 0.742 & 0.676 & 0.762 & 0.839 & 0.808  \\
Entropy~\cite{la2008detecting}               & GPT-Neo-2.7B                      & 0.314 & 0.411 & 0.347 & 0.327 & 0.396 & 0.506 & 0.419 & 0.460 & 0.397  \\
Rank~\cite{ge2019gltr}                  & GPT-Neo-2.7B                      & 0.704 & 0.645 & 0.689 & 0.706 & 0.626 & 0.680 & 0.746 & 0.735 & 0.691  \\
LogRank~\cite{ge2019gltr}               & GPT-Neo-2.7B                      & 0.906 & 0.763 & 0.865 & 0.904 & 0.735 & 0.680 & 0.768 & 0.840 & 0.808  \\
DNA-GPT~\cite{yang2024dna}               & GPT-Neo-2.7B                      & 0.719 & 0.643 & 0.708 & 0.733 & 0.644 & 0.618 & 0.633 & 0.795 & 0.687  \\
DetectGPT~\cite{mitchell2023detectgpt}             & T5-11B\,+\,GPT-Neo-2.7B             & 0.783 & 0.614 & 0.797 & 0.778 & 0.741 & 0.455 & 0.542 & 0.594 & 0.663  \\
$\spadesuit$ Fast-DetectGPT~\cite{bao2024fast} & GPT-J-6B\,+\,GPT-Neo-2.7B & 0.949 & 0.900 & 0.926 & 0.947 & 0.807 & 0.854 & 0.810 & 0.955 & 0.893  \\
$\diamondsuit$ Binoculars~\cite{hans2024spotting} & Falcon-7B\,+\,Falcon-7B-Instruct & \second{0.994} & \second{0.981} & \second{0.981} & \second{0.986} & \second{0.920} & \second{0.966} & \second{0.971} & \second{0.990} & \second{0.974}  \\

\specialrule{0.05em}{0.4ex}{0ex}

\rowcolor{gray!15} \multicolumn{11}{l}{\textit{\textbf{Fast-DetectGPT (w/ Proxy Alignment)}}} \\

\rule{0pt}{10pt}$\spadesuit$ + DALD~\cite{zeng2024dald} & \makecell[l]{Llama-2-7B\,+\,Llama-2-7B$^{\star}$} & 0.977 & 0.899 & 0.966 & 0.965 & 0.889 & 0.811 & 0.899 & 0.981 & 0.923 \gain{3.36} \\

$\spadesuit$ + Glimpse-Geometric~\cite{bao2025glimpse} & GPT-4 (API) & 0.974 & 0.965 & 0.962 & 0.982 & 0.895 & - & - & - & \ \ \;-  \\

$\spadesuit$ + Glimpse-Zipfian~\cite{bao2025glimpse} & GPT-4 (API) & 0.977 & 0.972 & 0.961 & 0.979 & 0.899 & - & - & - & \ \ \;- \\

$\spadesuit$ + Glimpse-MLP~\cite{bao2025glimpse} & GPT-4 (API) & 0.977 & 0.971 & 0.963 & 0.981 & 0.900 & - & - & - & \ \ \;- \\

$\spadesuit$ + Glimpse-Geometric~\cite{bao2025glimpse}  & DaVinci-002 (API) & 0.976 & 0.913 & 0.961 & 0.974 & 0.860 & 0.845 & 0.924 & 0.987 & 0.930 \gain{4.14}  \\




$\spadesuit$ + \alg (Ours) & \makecell[l]{Falcon-7B\,+\,Falcon-7B-Instruct$^{\star}$} & 0.976 & 0.958 & 0.961 & 0.967 & 0.890 & 0.948 & 0.946 & 0.987 & 0.954 \gain{6.83} \\

\rowcolor{gray!15} \multicolumn{11}{l}{\textit{\textbf{Binoculars (w/ Proxy Alignment)}}} \\

$\diamondsuit$ + DALD~\cite{zeng2024dald} & \makecell[l]{Llama-2-7B\,+\,Llama-2-7B$^{\star}$} & 0.960 & 0.868 & 0.940 & 0.948 & 0.810 & 0.689 & 0.773 & 0.937 & 0.866 \loss{11.09}\\

$\diamondsuit$ + \alg (Ours) & \makecell[l]{Falcon-7B\,+\,Falcon-7B-Instruct$^{\star}$} & \best{0.996} & \best{0.994} & \best{0.992} & \best{0.991} & \best{0.975} & \best{0.989} & \best{0.988} & \best{0.993} & \best{0.990} \gain{1.64} \\

\bottomrule
\end{tabular}%
\begin{tablenotes}[para,flushleft]
\textbf{\textit{Notes:}} 
Detectors using two proxy LLMs are denoted as ``reference LLM\,+\,scoring LLM''. $\spadesuit$: Fast-DetectGPT; $\diamondsuit$: Binoculars; $\star$: the aligned LLM; ``–'': unavailable results. The best and second-best results are highlighted in \bestcap{yellow} and \secondcap{blue}.
\end{tablenotes}
\end{threeparttable}%
}
\end{table*}

\subsubsection{Mix8 Benchmarking} 
We construct the Mix8 benchmark by combining test sets from Glimpse~\cite{bao2025glimpse} and DNA-DetectLLM~\cite{zhu2025dna}, resulting in a unified suite that covers eight latest commercial LLMs. 
The Glimpse test set covers three domains, including XSum~\cite{narayan2018don}, WritingPrompts~\cite{fan2018hierarchical}, and PubMedQA~\cite{jin2019pubmedqa}. 
For each domain, we randomly sample 150 HWTs and generate LGTs by prompting each source model with the 30-token prefix of the same texts. 
The Glimpse test set includes five source LLMs, including GPT-3.5, GPT-4~\cite{achiam2023gpt}, Claude-3 Sonnet/Opus~\cite{claude3}, and Gemini-1.5 Pro~\cite{team2024gemini}. 
DNA-DetectLLM complements this setup with 150 sampled HWTs spanning XSum~\cite{narayan2018don}, WritingPrompts~\cite{fan2018hierarchical}, and arXiv~\cite{paul2021arXiv}, paired with three more recent proprietary models, GPT-4 Turbo~\cite{gpt4turbo}, Gemini-2.0 Flash~\cite{gemini2}, and Claude-3.7 Sonnet~\cite{claude37}.

\subsubsection{DetectRL Benchmarking} 
DetectRL~\cite{wu2024detectrl} covers 4 domains, including arXiv~\cite{paul2021arXiv}, XSum~\cite{narayan2018don}, WritingPrompts~\cite{fan2018hierarchical} and Yelp Reviews~\cite{zhang2015character}. 
LGT is generated by GPT-3.5 Turbo~\cite{gpt35}, PaLM-2-Bison~\cite{anil2023palm}, Claude Instant 1.2~\cite{claudei}, and Llama-2-70B~\cite{touvron2023llama2}. Crucially, DetectRL includes a large number of attack-augmented test samples beyond direct prompting, including prompt-based, paraphrase, and perturbation attacks. 

\subsubsection{Competitors} 
We compare \alg against 11 zero-shot detectors, including Likelihood~\cite{ge2019gltr}, Entropy~\cite{la2008detecting}, Rank~\cite{ge2019gltr}, LogRank~\cite{ge2019gltr}, LRR~\cite{su2023detectllm}, NPR~\cite{su2023detectllm}, Revise-Detect~\cite{zhu2023beat}, DNA-GPT~\cite{yang2024dna}, DetectGPT~\cite{mitchell2023detectgpt}, Fast-DetectGPT~\cite{bao2024fast}, and Binoculars~\cite{hans2024spotting}. Each detector is evaluated with its default proxy LLM, including GPT-Neo-2.7B~\cite{gptneo}, GPT-J-6B~\cite{gptj}, T5-11B~\cite{t5}, Llama-2-7B~\cite{touvron2023llama2}, Falcon-7B~\cite{falcon}, and Falcon-7B-Instruct~\cite{falconi} (cf. Table~\ref{tab:mix8}), as well as 2 proxy alignment methods, DALD~\cite{zeng2024dald} (with their open-source aligned Llama-2-7B~\cite{dald_llama}) and Glimpse~\cite{bao2025glimpse} with DaVinci-002 API~\cite{davinci002}.

\subsubsection{Implementation Details} 
For Mix8, we build the datastore with the GPT-4 responses in WildChat~\cite{zhao2024wildchat}, matching the same training set used by DALD~\cite{zeng2024dald} for a fair comparison. 
For DetectRL, we use the training split to construct the datastore. For \algm, we use NV-Embed-v2~\cite{lee2025nvembed} to obtain sentence embeddings for routing. 
All experiments are run on a single NVIDIA RTX A6000 GPU. 

\subsection{Results on Mix8}
\label{sec:mix8}

\newcommand{\method}[1]{\makebox[3cm][l]{#1}}

\begin{table*}[t]
\centering

\caption{In-domain evaluation (AUROC) on DetectRL under two complementary settings: Multi-Topic and Multi-LLM.}

\label{tab:detectrl_in}
\setlength{\tabcolsep}{9pt}
\scalebox{0.75}{%
\begin{threeparttable}
\begin{tabular}{lcccclccccl}

\toprule

& \multicolumn{5}{c}{\textbf{In-Domain Multi-Topic}} & \multicolumn{5}{c}{\textbf{In-Domain Multi-LLM}} \\

\cmidrule(lr){2-6}  \cmidrule(lr){7-11}

\textbf{Method} & arXiv & XSum & Writing & Review & \textsc{\textbf{Avg.}} \gainraw{Gain} & GPT-3.5 & Claude & PaLM-2 & Llama-2 & \textsc{\textbf{Avg.}} \gainraw{Gain} \\
\specialrule{0.05em}{0.4ex}{0ex}
\rowcolor{gray!15} \multicolumn{11}{l}{\textit{\textbf{Zero-shot Detector (w/o Proxy Alignment)}}} \\ 
\rule{0pt}{10pt}$\heartsuit$ Likelihood~\cite{ge2019gltr}      & 0.988 & 0.457 & 0.680 & 0.758 & 0.632 & 0.629 & 0.433 & 0.700 & 0.757 & 0.625  \\
Entropy~\cite{la2008detecting}             & 0.484 & 0.678 & 0.391 & 0.288 & 0.465 & 0.468 & 0.523 & 0.453 & 0.435 & 0.470  \\
Rank~\cite{ge2019gltr}                & 0.572 & 0.369 & 0.563 & 0.551 & 0.511 & 0.522 & 0.417 & 0.504 & 0.571 & 0.503  \\
LogRank~\cite{ge2019gltr}            & 0.670 & 0.467 & 0.676 & 0.764 & 0.644 & 0.628 & 0.433 & 0.709 & 0.780 & 0.638  \\
LRR~\cite{su2023detectllm}                 & 0.705 & 0.501 & 0.647 & 0.766 & 0.655 & 0.616 & 0.433 & 0.712 & 0.837 & 0.644  \\
NPR~\cite{su2023detectllm}                 & 0.539 & 0.346 & 0.550 & 0.501 & 0.489 & 0.503 & 0.416 & 0.446 & 0.525 & 0.478  \\
DetectGPT~\cite{mitchell2023detectgpt}           & 0.222 & 0.122 & 0.590 & 0.444 & 0.344 & 0.435 & 0.329 & 0.267 & 0.367 & 0.344  \\
Revise-Detect~\cite{zhu2023beat}      & 0.704 & 0.503 & 0.732 & 0.750 & 0.678 & 0.701 & 0.499 & 0.698 & 0.757 & 0.669  \\
DNA-GPT~\cite{yang2024dna}             & 0.674 & 0.642 & 0.690 & 0.782 & 0.697 & 0.619 & 0.489 & 0.715 & 0.752 & 0.649 \\
$\spadesuit$ Fast-DetectGPT~\cite{bao2024fast}      & 0.437 & 0.392 & 0.742 & 0.770 & 0.580 & 0.656 & 0.300 & 0.660 & 0.768 & 0.596  \\
$\diamondsuit$ Binoculars~\cite{hans2024spotting}          & 0.840 & 0.774 & 0.944 & 0.900 & 0.870 & 0.881 & 0.552 & 0.933 & 0.966 & 0.833 \\


\rowcolor{gray!15} \multicolumn{11}{l}{\textit{\textbf{Zero-shot Detector (w/ Proxy Alignment)}}} \\ 
\rule{0pt}{10pt}$\heartsuit$ + \alg (Ours) & 0.986 &  0.966 &  0.982 &  0.997 &  0.983 \gain{55.54} &  0.969 &  0.958 &  0.969 &  0.991 & 0.972  \gain{55.52} \\

$\spadesuit$ + \alg (Ours) & \second{0.998} & \second{0.987} & \second{0.988} & \second{0.997} & \second{0.992}  \gain{71.03}  & \second{0.983} & \second{0.988} &  \second{0.990} &  \second{0.994} & \second{0.989}   \gain{65.94}  \\

$\diamondsuit$ + \alg (Ours) & \best{1.000} &  \best{0.997} &  \best{0.998} &  \best{1.000} &  \best{0.999}  \gain{14.83}  & \best{0.997} &  \best{0.999} &  \best{0.998} &  \best{0.999} &  \best{0.998}  \gain{19.81}  \\
\bottomrule

\end{tabular}%
\begin{tablenotes}[para,flushleft]
\textbf{\textit{Notes:}} 
$\heartsuit$: Likelihood; $\spadesuit$: Fast-DetectGPT; $\diamondsuit$: Binoculars. The best and second-best results are highlighted in \bestcap{yellow} and \secondcap{blue}.
\end{tablenotes}
\end{threeparttable}%
}
\end{table*}

\begin{table*}[t]
\centering
\caption{Out-of-domain evaluation (F1 score) on DetectRL under Multi-LLM and Multi-Topic settings.
}

\label{tab:detectrl_out}
\setlength{\tabcolsep}{7.5pt}
\scalebox{0.75}{%
\begin{threeparttable}
\begin{tabular}{llccccclccccc}

\toprule

& \multicolumn{6}{c}{\textbf{Out-of-Domain Multi-LLM}} 
& \multicolumn{6}{c}{\textbf{Out-of-Domain Multi-Topic}} \\

\cmidrule(r){2-7}  \cmidrule(r){8-13}

\textbf{Method} 
& \backslashbox{Proxy \kern-1.2em}{\kern-1.2em Source} & GPT-3.5 & Claude & PaLM-2 & Llama-2 & \textsc{\textbf{Avg.}} 
& \backslashbox{Proxy \kern-1.2em}{\kern-1.2em Source} & arXiv & XSum & Writing & Review & \textsc{\textbf{Avg.}} \\

\midrule


\multirow{4}{*}{$\spadesuit$ Fast-DetectGPT~\cite{bao2024fast}}
& GPT-3.5 & - & 0.130  & 0.596  & 0.699  &  & arXiv & - & 0.237  & 0.597  & 0.602  & \\
& Claude & 0.002  & - & 0.000  & 0.012  &  & XSum & 0.284  & - & 0.630  & 0.631  &  \\
& PaLM-2 & 0.558  & 0.082  & - & 0.684  &  & Writing & 0.348  & 0.336  & - & 0.683  &  \\
& Llama-2 & 0.563  & 0.087  & 0.577  & - & \multirow{-4}{*}{0.332} & Review & 0.407  & 0.377  & 0.683  & - & \multirow{-4}{*}{0.484} \\

\midrule

\multirow{4}{*}{$\diamondsuit$ Binoculars~\cite{hans2024spotting}}
& GPT-3.5 & - & 0.397  & 0.880  & 0.917  & & arXiv & - & 0.720  & 0.795  & 0.836  & \\
& Claude & 0.822  & - & 0.879  & 0.912  &  & XSum & 0.763  & - & 0.797  & 0.832  & \\
& PaLM-2 & 0.772 & 0.303 & - & 0.910 &  & Writing & 0.736 & 0.697 & - & 0.818 & \\
& Llama-2 & 0.814 & 0.355 & 0.886 & - & \multirow{-4}{*}{0.737} & Review & 0.777 & 0.720 & 0.794 & - & \multirow{-4}{*}{0.774} \\

\midrule

\multirow{4}{*}{\shortstack[l]{$\spadesuit$ + \alg (Ours)}}

& GPT-3.5 & - & 0.453  & 0.818  & 0.936  & \cellcolor{secondcol} 

& arXiv & - & 0.737  & 0.813  & 0.865   & \cellcolor{secondcol} \\

& Claude & 0.902  & - & 0.808  & 0.918   & \cellcolor{secondcol} & XSum & 0.751  & - & 0.849  & 0.876   & \cellcolor{secondcol} \\

& PaLM-2 & 0.908  & 0.471  & - & 0.923   & \multirow{-3}{*}{\cellcolor{secondcol}\underline{0.783}} 

& Writing & 0.731  & 0.741  & - & 0.868   & \multirow{-3}{*}{\cellcolor{secondcol}\underline{0.795}} \\

& Llama-2 & 0.917  & 0.515  & 0.828  & - & \multirow{-2}{*}{\cellcolor{secondcol}\gain{135.84}} 

& Review & 0.728  & 0.733  & 0.845  & - & \multirow{-2}{*}{\cellcolor{secondcol}\gain{64.26}} \\

\midrule

\multirow{4}{*}{\shortstack[l]{$\diamondsuit$ + \alg (Ours)}}

& GPT-3.5 &- & 0.679  & 0.966  & 0.981   & \cellcolor{bestcol} & arXiv & - & 0.918  & 0.878  & 0.907  & \cellcolor{bestcol} \\

& Claude & 0.954  & - & 0.962  & 0.979  & \cellcolor{bestcol} & XSum & 0.902  & - & 0.907  & 0.932   & \cellcolor{bestcol} \\

& PaLM-2 & 0.965  & 0.712  & - & 0.982   & \multirow{-3}{*}{\cellcolor{bestcol}\textbf{0.906}} & 

Writing & 0.882  & 0.939  & - & 0.931  & \multirow{-3}{*}{\cellcolor{bestcol}\textbf{0.909}} \\

& Llama-2 & 0.971  & 0.759  & 0.965  & - & \multirow{-2}{*}{\cellcolor{bestcol}\gain{22.93}} 

& Review & 0.877  & 0.935  & 0.901  & - & \multirow{-2}{*}{\cellcolor{bestcol}\gain{17.44}} \\

\midrule

$\spadesuit$ + \algm (Ours) & \multicolumn{6}{c}{N/A} & w/o src & 0.738  & 0.740  & 0.844  & 0.869  & 0.798  \\

\midrule

$\diamondsuit$ + \algm (Ours) & \multicolumn{6}{c}{N/A} & w/o src & 0.880  & 0.933  & 0.901  & 0.931  & 0.911  \\

\bottomrule
\end{tabular}%
\begin{tablenotes}[para,flushleft]
\textbf{\textit{Notes:}} 
$\spadesuit$: Fast-DetectGPT; $\diamondsuit$: Binoculars. For \alg, we construct the datastore using the training set of a single LLM/topic (row label) and evaluate it on the remaining LLMs/topics (columns). ``-'' denotes the in-domain entries which are omitted. The best and second-best results are highlighted in \bestcap{yellow} and \secondcap{blue}.
\end{tablenotes}
\end{threeparttable}%
}
\end{table*}

Table~\ref{tab:mix8} reports AUROC on the Mix8 benchmark, where results for each source LLM are averaged over three topic domains. Overall, our strongest instantiation (Binoculars detector $\diamondsuit$ with proxy LLMs Falcon-7B + Falcon-7B-Instruct$^{\star}$, ${\star}$ denotes the aligned LLM) achieves the best performance across all source LLMs, reaching an average AUROC of 0.990. Compared with the default Binoculars setting (0.974), the relative improvement is 1.64\% under the same proxy LLM pair. In contrast, combining Binoculars with DALD yields 0.866 average AUROC, indicating a relative decrease of 11.09\% versus the original Binoculars detector.

Among zero-shot methods, classic methods such as Rank and Entropy exhibit limited separability on modern closed models, whereas the latest detectors, such as Fast-DetectGPT and Binoculars are substantially stronger. Therefore, we further instantiate Fast-DetectGPT $\spadesuit$ and Binoculars $\diamondsuit$ with aligned proxy LLMs to compare different proxy-alignment strategies. Notably, Glimpse relies on top-$k$ probabilities returned by the API to approximate the full next-token distributions which can be fragile in practice. For example, the GPT-4 API no longer returns top-$k$ probabilities~\cite{glimpse_issue}, so we mark the corresponding entries as ``-''. For Fast-DetectGPT, all alignment strategies improve over its original zero-shot baseline of 0.893. DALD reaches 0.923 with a gain of 3.36\%. Glimpse with DaVinci-002 reaches 0.930 with a gain of 4.14\%. \alg achieves the best AUROC of 0.954 with a gain of 6.83\%. The results demonstrate that \alg provides a more reliable proxy alignment mechanism that leverages corpus retrieval to strengthen black-box LGT detection across a broad set of proprietary LLMs.

\subsection{Results on DetectRL}
\label{sec:detectrl}

\subsubsection{In-domain Evaluation}
Table~\ref{tab:detectrl_in} reports the in-domain results on DetectRL under the Multi-Topic and Multi-LLM settings, in which \alg uses the in-domain training set to build the datastore. We instantiate \alg on top of multiple zero-shot detectors and observe consistent gains in both settings. Our strongest instantiation, Binoculars $\diamondsuit$ + \alg, reaches near-ceiling AUROC at 0.999 for Multi-Topic and 0.998 for Multi-LLM. The improvement is not limited to a single baseline. Likelihood improves by about 56\%, Fast-DetectGPT improves by roughly 66--71\%, and even Binoculars still gains around 15--20\%. In addition, the original Binoculars setting shows notable cross-LLM variance in Multi-LLM, dropping to 0.552 on Claude, while \alg remains consistently strong across different source LLMs. These results suggest that aligning the proxy distribution with a retrieval-augmented distribution yields a reliable detection signal that remains stable across different topics and source-LLMs, especially with attack-augmented test samples.

\subsubsection{Out-of-domain Evaluation}
Beyond the in-domain evaluation, we further stress-test the detectors under the out-of-domain setting on DetectRL, with F1 score results reported in Table~\ref{tab:detectrl_out}. For \alg, we use a single unified threshold across all LLMs and topics. We find that \alg remains effective under distribution shift. Our strongest variant, $\diamondsuit$ + \alg, reaches a mean F1 score of about 0.91 in both Multi-LLM and Multi-Topic. Relative to the original Binoculars baseline, this translates to a clear gain of roughly 17--23\%. The effect is even more pronounced for Fast-DetectGPT, where alignment boosts F1 score from around 0.3--0.4 to about 0.79, especially in Multi-LLM. Overall, these results show that \alg yields a substantially more stable detection signal under combined topic shifts, source-LLM shifts, and attack perturbations.

\begin{figure}[t]
\vspace{-10pt}
\centering

\subfloat[Source Topic Included]{%
    \includegraphics[width=0.5\linewidth]{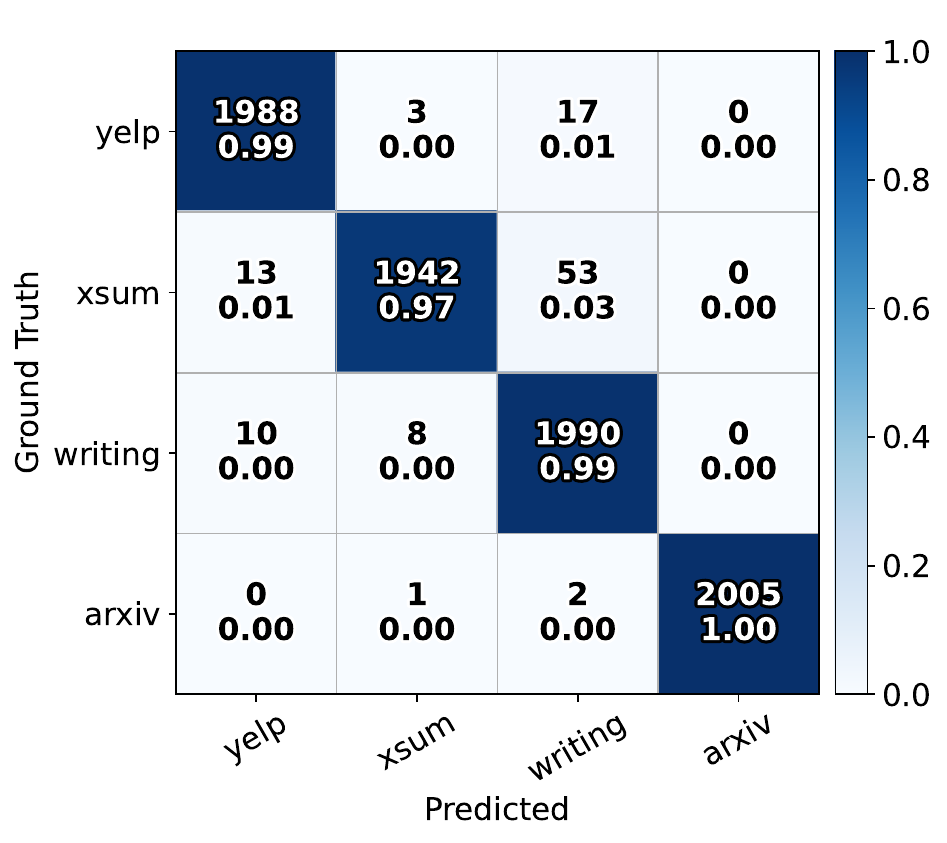}%
}
\subfloat[Source Topic Excluded]{%
    \includegraphics[width=0.5\linewidth]{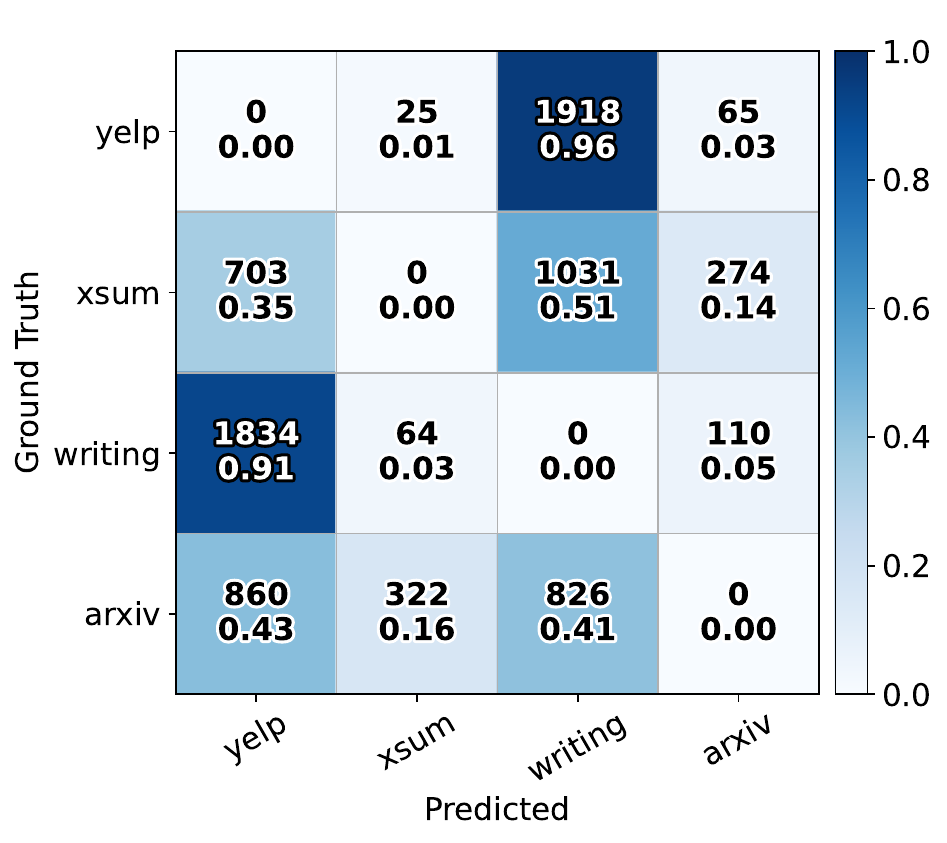}%
}

\caption{\textbf{Confusion matrix of the topic MoE router predictions.} 
Each cell reports the count and row-normalized proportion (rows sum to 1), and the diagonal entries give the per-topic accuracy.}
\label{fig:moe_router}
\end{figure}

\subsubsection{\algm Results} 
Additionally, we evaluate the effectiveness of our \algm design for cross-topic generalization.
For each source topic, we build three topic-specific \alg\!\!s using the training sets of the remaining topics as the retrieval corpora, and use the topic router (Sec.~\ref{sec:moknnproxy}) to dispatch each test sample to the most suitable topic-specific \alg. 
As reported in the last two rows in Table~\ref{tab:detectrl_out}, \algm achieves the average F1 score of 0.798 for $\spadesuit$ + \algm and 0.911 for $\diamondsuit$ + \algm, showing that routing across domain-specific proxies can further improve OOD robustness. 
To assess the accuracy of the MoE router, we show the confusion matrices in Fig.~\ref{fig:moe_router} under two settings: including and excluding the source topic. 
Using a sentence-embedding model with a \knn router, we achieve over 0.97 routing accuracy across the four topics. When the source topic is excluded, the router assigns each test sample to its most similar topic, and the results align with the topical readability and lexical distribution of DetectRL.

\begin{figure}[t]
\vspace{-10pt}
\centering

\subfloat[Temperature $\tau$]{%
    \includegraphics[width=0.5\linewidth]{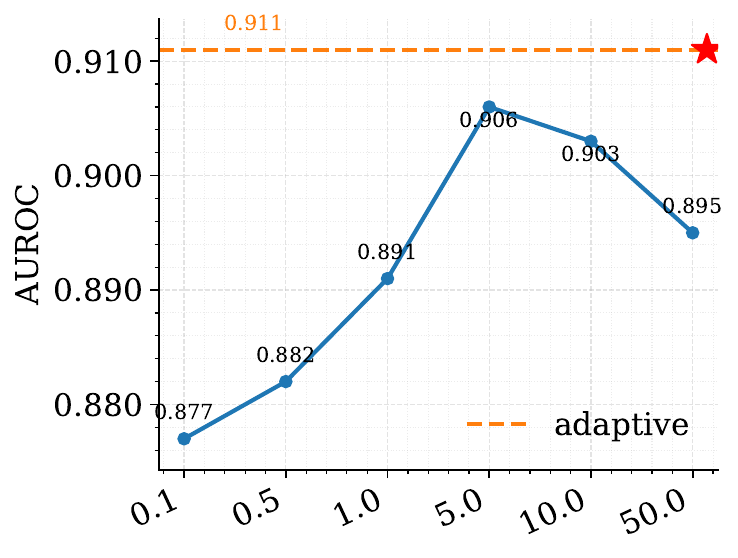}%
}
\subfloat[Neighbor Size $k$]{%
    \includegraphics[width=0.5\linewidth]{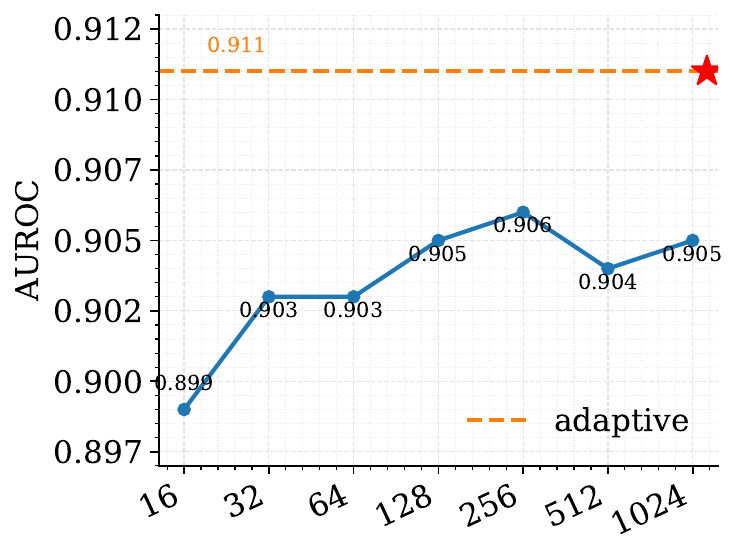}%
}
\\ 
\subfloat[Interpolation weight $\lambda$]{%
    \includegraphics[width=0.5\linewidth]{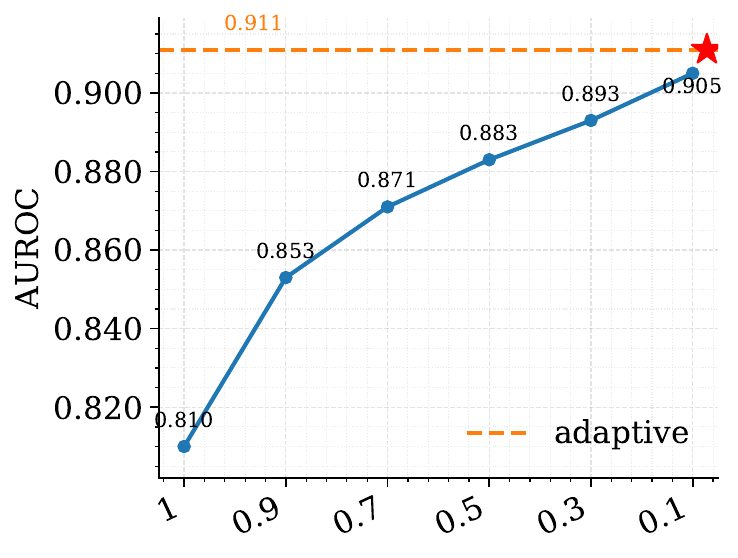}%
}
\subfloat[Clip Value $\gamma$]{%
    \includegraphics[width=0.5\linewidth]{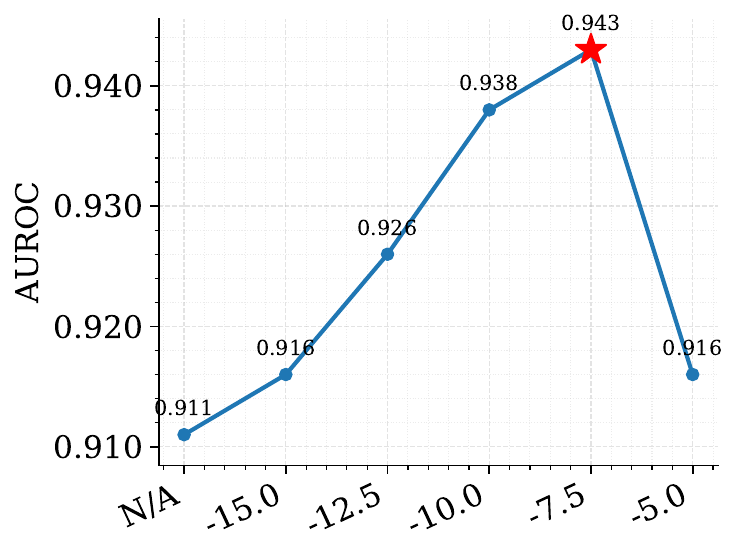}%
}
\\
\subfloat[Corpus Size $N$]{%
    \includegraphics[width=0.5\linewidth]{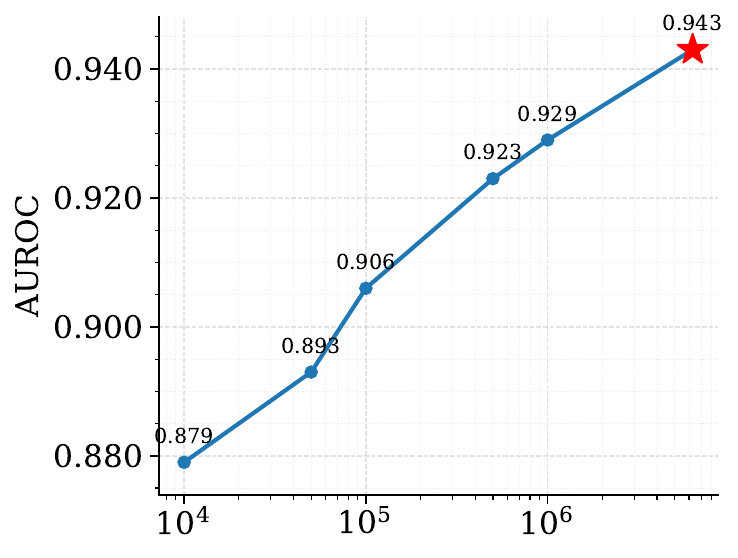}%
}
\caption{\textbf{Ablation study of \alg hyperparameters.} 
Each panel shows the AUROC (↑) performance and $\star$ marks the best-performing setting. 
“adaptive” indicates using the corresponding adaptive strategy in Sec.~\ref{sec:adapt_param}.}
\label{fig:ablation}
\end{figure}
 
\subsection{Ablation Studies} 
\label{sec:ablation}
 
\subsubsection{\alg Hyperparameters} 
We analyze the sensitivity of \alg to 5 key hyperparameters in Fig.~\ref{fig:ablation}.
\textbf{(1)~Temperature:} 
The temperature parameter $\tau$ controls how sharply retrieval weights concentrate on nearest neighbors. With fixed values, performance improves from 0.877 at $\tau=0.1$ to a peak of 0.906 at $\tau=5$, then drops to 0.895 at larger values $\tau=50$. The adaptive-temperature variant further improves performance to 0.911.
\textbf{(2)~Neighbor Size $k$:} 
We vary $k$ from 16 to 1024 and evaluate the adaptive-$k$ strategy. Fixed-$k$ performance is relatively stable in the 0.899--0.906 range, with the best fixed point at $k=256$. The adaptive-$k$ variant achieves 0.911 and outperforms all fixed settings.
\textbf{(3)~Interpolation Weight $\lambda$:} 
The interpolation weight controls how much the retrieval distribution contributes to the final distribution. Recall that a smaller $\lambda$ means a larger contribution from the retrieval distribution. As $\lambda$ decreases from 1 to 0.1, AUROC improves consistently from 0.810 to 0.905, and adaptive $\lambda$ reaches the best value of 0.911. 
\textbf{(4)~Clip Value $\gamma$:} 
We study clipping to suppress extreme low-confidence token effects. Starting from 0.911 without clipping, AUROC increases as clipping becomes moderately stronger and reaches the best value 0.943 at $\gamma=-7.5$. Overly aggressive clipping then hurts performance, dropping to 0.916 at $\gamma=-5.0$.
\textbf{(5)~Corpus Size $N$:} 
We study how the size of the domain corpus used to build the datastore affects performance. Larger corpora consistently improve AUROC, from 0.879 at 1\,\texttimes\,10$^\text{4}$ to 0.943 with the full corpus 6\,\texttimes\,10$^\text{6}$, confirming that richer domain coverage provides more reliable neighbor evidence and strengthens proxy alignment.


\begin{table}[t]
\caption{Ablation studies of components in \alg.
}
\label{tab:ablation}
\centering
\setlength{\tabcolsep}{6pt}
\scalebox{0.75}{%
\begin{threeparttable}
\begin{tabular}{lcccccr}

\toprule

\textbf{Setting}
& \small GPT-3.5 &\small GPT-4 &\small Claude-S &\small Claude-O &\small Gemini & \textsc{\textbf{Avg.}} \\

\specialrule{0.05em}{0.4ex}{0ex}

\rowcolor{gray!15}
\multicolumn{7}{l}{\textit{\textbf{Corpus Source LLM}}} \\
\rule{0pt}{10pt}Claude
& 0.943 & 0.867 & 0.916 & 0.937 & 0.797 & 0.897 \\
PaLM
& 0.955 & 0.908 & 0.916 & 0.940 & 0.825 & 0.913 \\
Llama
& 0.961 & 0.901 & 0.921 & 0.948 & 0.815 & 0.914 \\
GPT-3.5
& 0.968 & 0.932 & 0.937 & 0.952 & 0.857 & 0.933 \\
\rowcolor{yellow!15} GPT-4
& 0.984 & 0.954 & 0.966 & 0.975 & 0.888 & \best{0.953}\\

\specialrule{0.05em}{0.4ex}{0ex}

\rowcolor{gray!15}
\multicolumn{7}{l}{\textit{\textbf{Prompt Template}}} \\
\rule{0pt}{10pt}Prompt 0
& 0.984 & 0.954 & 0.966 & 0.975 & 0.888 & 0.953 \\
Prompt 1
& 0.984 & 0.961 & 0.967 & 0.977 & 0.896 & 0.957 \\
\rowcolor{yellow!15} Prompt 2
& 0.984 & 0.965 & 0.972 & 0.979 & 0.905 & \best{0.961} \\
Prompt 3
& 0.983 & 0.963 & 0.969 & 0.975 & 0.903 & 0.959 \\
Prompt 4
& 0.983 & 0.963 & 0.969 & 0.975 & 0.905 & 0.959 \\

\specialrule{0.05em}{0.4ex}{0ex}

\rowcolor{gray!15}
\multicolumn{7}{l}{\textit{\textbf{Base Proxy LLM}}} \\
\rule{0pt}{10pt}GPT-Neo-2.7B & 0.979 & 0.941 & 0.947 & 0.963 & 0.873 & 0.941 \\
\rowcolor{yellow!15} Falcon-7B-Instruct & 0.984 & 0.954 & 0.966 & 0.975 & 0.888 & \best{0.953} \\
Falcon-7B & 0.920 & 0.745 & 0.882 & 0.916 & 0.715 & 0.836 \\
Llama-2-7B & 0.970 & 0.894 & 0.944 & 0.952 & 0.858 & 0.924 \\
Llama-3.1-8B & 0.967 & 0.854 & 0.940 & 0.959 & 0.846 & 0.913 \\

\bottomrule

\end{tabular}%
\begin{tablenotes}[para,flushleft]
\textbf{\textit{Notes:}} 
Claude-S: Claude-3 Sonnet; Claude-O: Claude-3 Opus; Gemini: Gemini-1.5 Pro.
The best results are highlighted in \bestcap{yellow}. 
\end{tablenotes}
\end{threeparttable}%
}
\end{table}

\begin{table}[t!]
\centering
\caption{Illustration of the prompt templates. 
}
\label{tab:prompts}
\scalebox{0.75}{%
\begin{tabular}{lp{8.75cm}}
\toprule
\textbf{Prompt ID} & \textbf{Content} \\
\midrule
Prompt 0 & (Empty) \\
\midrule
Prompt A & \textit{You serve as a valuable aide, capable of generating clear and persuasive pieces of writing given a certain context. Now, assume the role of an author and strive to finalize this article.} \\
\midrule
Prompt B &  \textit{I operate as an entity utilizing GPT as the foundational large language model. I function in the capacity of a writer, authoring articles on a daily basis. Presented below is an example of an article I have crafted.} \\
\midrule
Prompt 1 & [Prompt A]\\
\midrule
Prompt 2 & [Prompt A] [Prompt B] \\
\midrule
Prompt 3 & \textit{System:} [Prompt A] \quad \textit{Assistant:} [Prompt B] \\
\midrule
Prompt 4 & \textit{Assistant:} [Prompt A] \quad \textit{User:} [Prompt B] \\
\bottomrule
\end{tabular}%
}
\end{table}

\subsubsection{Proxy-related Ablation} 
Table~\ref{tab:ablation} studies three proxy-related factors in \alg. Following Glimpse, we apply prompt concatenation before test samples, and the prompt templates are listed in Table~\ref{tab:prompts}. We report the detection AUROC for five black-box target LLMs, along with the average AUROC. \textbf{(1)~Corpus Source LLM:} The source LLM used to build the retrieval corpus strongly affects performance. As the corpus source becomes stronger, the average AUROC rises from 0.897 (Claude) to 0.953 (GPT-4), showing that higher-quality source-reflective corpora provide more effective alignment signals. \textbf{(2)~Prompt Template:} Prompt formatting also contributes. Prompt 2 (in Table~\ref{tab:prompts}) achieves the best average AUROC of 0.961, while the other prompts remain close in the 0.953--0.959 range, indicating that concatenating role-aware context provides a consistent but moderate gain. \textbf{(3)~Base Proxy LLM:} The base proxy choice has a clear impact. Falcon-7B-Instruct performs best with 0.953 average AUROC, followed by GPT-Neo-2.7B at 0.941. Other candidates are weaker, including Llama-2-7B at 0.924, Llama-3.1-8B at 0.913, and Falcon-7B at 0.836.

\begin{figure}[t]
\centering
\includegraphics[width=\linewidth]{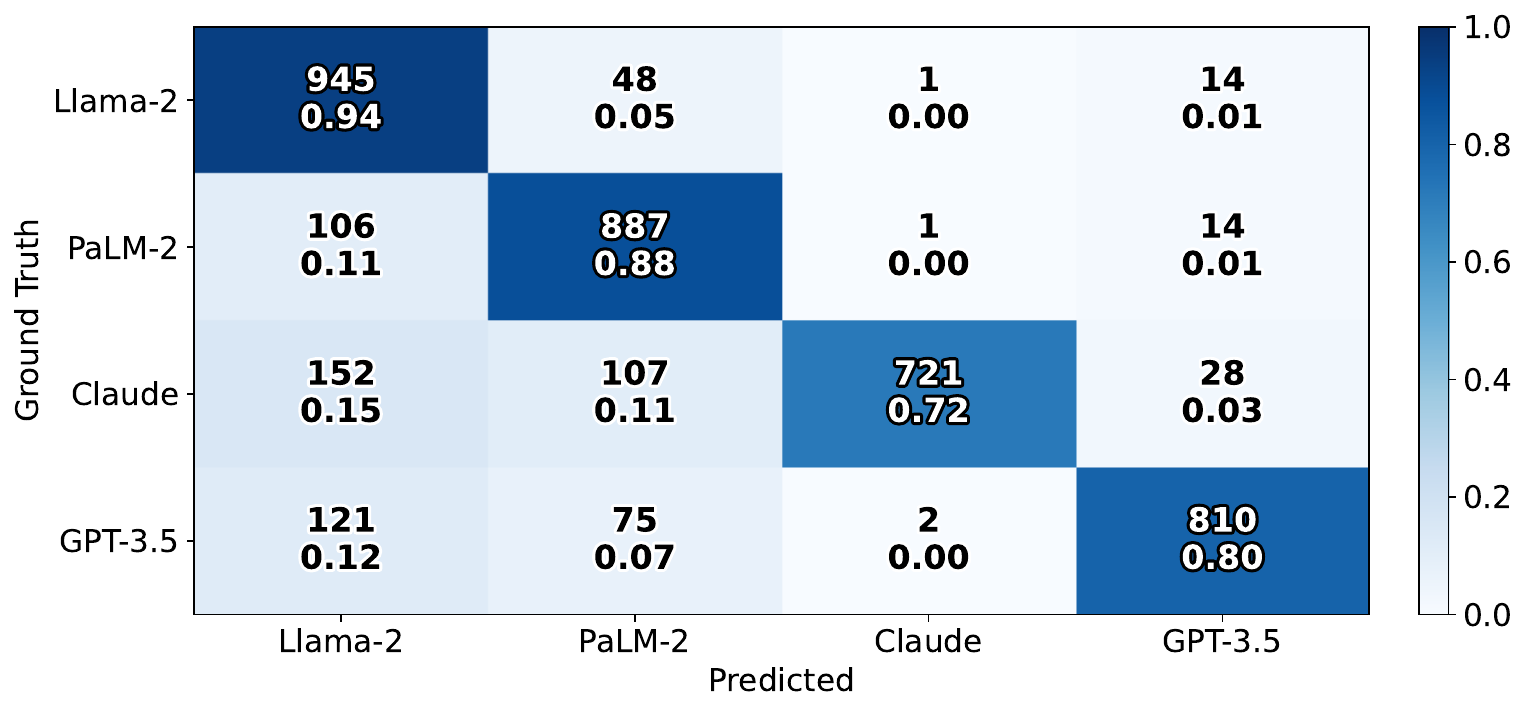}
\vspace*{-5mm}
\caption{\textbf{Confusion matrix of LLM source attribution. }
Each cell reports the count and row-normalized proportion (rows sum to 1), and the diagonal entries give the per-LLM accuracy.}
\label{fig:llm_attri}
\end{figure}

\subsection{LLM Source Attribution}
\label{sec:llmattri}

We further show that \alg is not limited to LGT detection, but can also be used for closed-set LLM source attribution. Given a set of source LLMs, we construct a distinct \alg for each source LLM using the source-LLM-generated corpus. 
At inference time, we compute the log-likelihood of the test samples under every \alg and predict the source as the model with the highest likelihood. 
To evaluate attribution accuracy, we conduct experiments on the DetectRL multi-LLM split, which contains 4 LLMs, including GPT-3.5 Turbo, PaLM-2-Bison, Claude-Instant, and Llama-2-70B. 
For each LLM, we use the training split to build the datastore, and evaluate on the test split with the confusion matrix shown in Fig.~\ref{fig:llm_attri}. \alg delivers strong closed-set source attribution, achieving an average accuracy of 0.84 across the four LLMs. In particular, it attributes Llama-2 and PaLM-2 reliably with accuracies of 0.94 and 0.88, respectively, while remaining competitive on GPT-3.5 and Claude. Specifically, most mistakes come from confusing Claude with Llama-2 or PaLM-2, which is consistent with DetectRL observations that Claude/PaLM/Llama generations have similar readability statistics and can be harder to separate. 


\begin{figure*}[tb]
\centering

\subfloat[Human-written text, w/o \alg, Avg. log-likelihood = -2.660]{%
    \includegraphics[width=\linewidth]{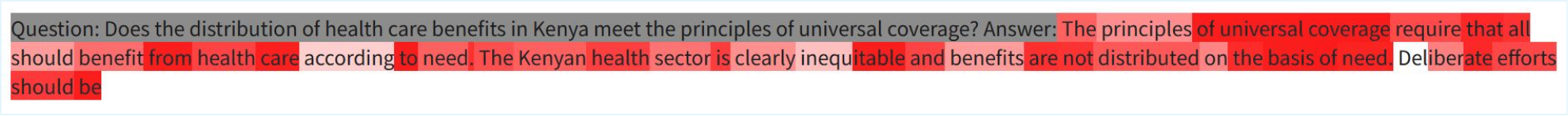}%
}

\subfloat[GPT-4-generated text, w/o \alg, Avg. log-likelihood = -2.705
]{%
    \includegraphics[width=\linewidth]{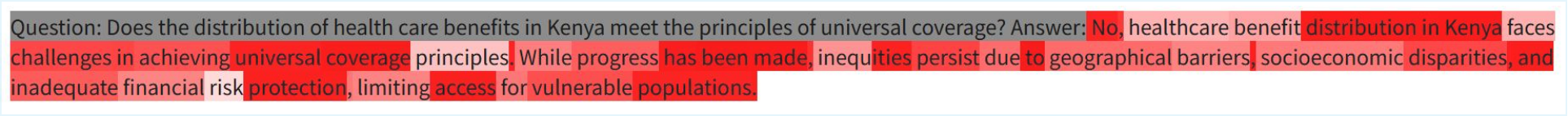}%
}

\subfloat[Human-written text, w/ \alg, Avg. log-likelihood = -3.349
]{%
    \includegraphics[width=\linewidth]{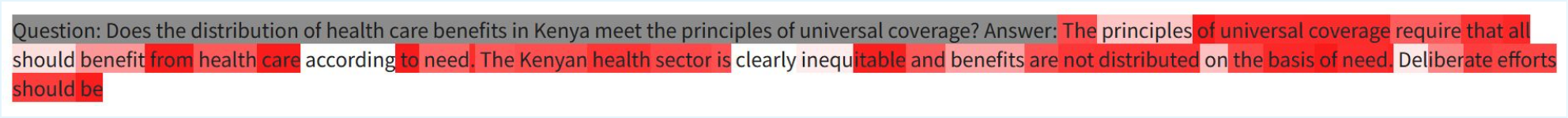}%
}

\subfloat[GPT-4-generated text, w/ \alg, Avg. log-likelihood = -2.994
]{%
    \includegraphics[width=\linewidth]{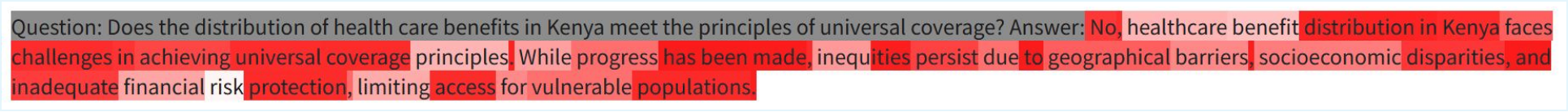}%
}

\caption{\textbf{Token-level log-likelihood visualization for a QA sample under Falcon-7B w/wo \alg.} 
Gray tokens denote the prompt, while the answer tokens are colored by their log-likelihood under the scoring model (white/red = lower/higher log-likelihood).
We compare (a) human-written text and (b) GPT-4 generated text w/o \alg, vs. (c) human-written text and (d) GPT-4 generated text w/ \alg.}
\label{fig:vis_sen}
\end{figure*}

\subsection{Visualization}
\label{sec:visualization}

We illustrate how \alg alignment changes the token-level log-likelihood landscape in Fig.~\ref{fig:vis_sen}. Without \alg, the HWT receive even higher scores than the LGT, suggesting limited separability when the proxy distribution is not well aligned to the source generator. After applying \alg, the per-token log-likelihood patterns and the overall score become more differentiated. Retrieval injects corpus-supported next-token evidence into the proxy distribution, which modifies token confidences in a context-dependent manner and yields a more stable scoring signal for black-box LGT detection.

\section{Conclusion}\label{sec:conclusion}

We present \alg, a training-free and query-efficient proxy alignment approach for black-box LGT detection that leverages \knn-LM retrieval as an adapter for a fixed proxy LLM. 
By constructing a datastore and using nearest-neighbor retrieval to induce a source-aware distribution that is blended with the proxy output, \alg aligns proxy--source LLMs without fine-tuning or intensive API queries. To further handle domain shift, we introduce \algm that routes each input to a domain-specific expert, and develop a theoretical analysis of the \alg approximation error, motivating token-wise adaptive \alg hyperparameters. Finally, extensive experiments on challenging modern benchmarks for LGT detection demonstrate that \alg can provide a stable detection signal under both topic and source LLM shifts.

\bibliographystyle{IEEEtran}
\bibliography{ref}


\end{document}